\DeclareMathOperator{\hist}{hist}
\DeclareMathOperator*{\argmin}{arg\,min}
\DeclareMathOperator{\Poisson}{Poisson}
\DeclareMathOperator{\T}{\top}
\DeclareMathOperator{\E}{\mathbb{E}}
\DeclareMathOperator{\supp}{Supp}
\newtheorem{theorem}{Theorem}
\newtheorem{corollary}{Corollary}
\newtheorem{lemma}{Lemma}
\newtheorem{definition}{Definition}
\begin{document}
%
\title{Minimax Optimal Sparse Signal Recovery with Poisson Statistics}

\author{
  \IEEEauthorblockN{Mohammad H. Rohban, Delaram Motamedvaziri, and Venkatesh Saligrama\\}
  \IEEEauthorblockA{
    Electrical and Computer Engineering Department Boston University}}

\maketitle

\begin{abstract}
 We are motivated by problems that arise in a number of applications such as Online Marketing and Explosives detection, where the observations are usually modeled using Poisson statistics. We model each observation as a Poisson random variable whose mean is a sparse linear superposition of known patterns. Unlike many conventional problems observations here are not identically distributed since they are associated with different sensing modalities. We analyze the performance of a Maximum Likelihood (ML) decoder, which for our Poisson setting involves a non-linear optimization but yet is computationally tractable. We derive fundamental sample complexity bounds for sparse recovery when the measurements are contaminated with Poisson noise. In contrast to the least-squares linear regression setting with Gaussian noise, we observe that in addition to sparsity, the scale of the parameters also fundamentally impacts $\ell_2$ error in the Poisson setting. We show tightness of our upper bounds both theoretically and experimentally. In particular, we derive a minimax matching lower bound on the mean-squared error and show that our constrained ML decoder is minimax optimal for this regime.
\end{abstract}

\IEEEpeerreviewmaketitle

\begin{IEEEkeywords}
Poisson Model Selection, Sparse Recovery, Regularized Maximum Likelihood, Optimal Minimax Estimator
\end{IEEEkeywords}

\section{Introduction}
\IEEEPARstart{I}{n} this paper, we study sparse signal recovery problem with Poisson observations. This problem is motivated by many practical applications where the observations are the counts of an event. The mean count in these applications depends linearly on a sparse subset of parameters. Our goal is to extract the sparse subset from a potentially large number of parameters. Some of the practical applications motivating our problem include explosive identification based on photon counts in fluoroscopy \cite{cite6}, and eMarketing based on website traffic \cite{cite13}.


Motivated by these applications we propose a high-dimensional sparse signal estimation problem governed by Poisson distributed observations. We model the rate of the Poisson process as a positive mixture of known signatures. The rates for different observations are obtained from heterogeneous sensors or different measurement settings and therefore the observations are not identically distributed.  
Specifically, we model observations $y_i$ as follows.
$$\forall i \in \{ 1,\ldots,n \} : y_i\sim \Poisson(\lambda_{0,i}+{\mathbf a}_i^{\T}{\mathbf w}^\star)$$
where $\lambda_{0,i}$ is the rate of the  background Poisson noise and assumed to be known. Each ${\mathbf a}_i=[a_{i,1}, \ldots ,a_{i,p}]^{\T}$ is a distinct vector corresponding to the $i$-th sensor. The collection of these vectors form the sensing matrix, ${\mathbf A} = [{\mathbf a}_1,\ldots, {\mathbf a}_n]^{\T}$. Our goal is to recover the sparse vector, ${\mathbf w}^\star$, from $\{ y_1, \ldots, y_n \}$. The parameters ${\mathbf w}^\star \in \mathbb{R}_+^p$ are the mixing proportions for different sensors. 

Our high-dimensional setting entails $p\gg n$ and the ground truth mixture parameter ${\mathbf w}^\star$ is assumed to be sparse with $\ell_0$ norm $\|{\mathbf w}^\star\|_0=k$.
%
%
We derive upper and lower bounds for sparse signal recovery for this Poisson setting. Our upper bounds are based on analyzing the performance of an $\ell_1$ constrained maximum-likelihood estimator. The optimal ML decoder is obtained by solving a convex optimization problem involving a non-linear objective function. We then analyze performance of minimax estimators, namely, optimal estimators for worst-case mean-squared error to obtain lower bounds. These lower bounds are derived by means of Fano bounds and are obtained by embedding the estimation problem into a detection setting by utilizing Gilbert-Varshamov bound \cite{Gun11}. Specifically, for a given sparsity level $k := \lVert {\mathbf w}^\star \rVert_0$, and a parameter amplitude $s := \lVert {\mathbf w}^\star \rVert_1$ we show that the upper bound for the estimation error scales as $\sqrt{s k/n}$. We then obtain a matching lower bound thus demonstrating the minimax optimality of our $\ell_1$ constrained ML estimator. 



\noindent
{\bf Dependence on Scale:}
The dependence of mean-squared error on scale parameter $s$ does not arise in conventional sparse linear least squares regression setting. Indeed sample complexity is primarily determined by sparsity for many random matrix designs and sample complexity improves with scale of the ground truth parameter ${\mathbf w}^\star$ for a fixed level of noise~\cite{aeron10}. In contrast, in our problem setting, sample complexity degrades with the scale of the parameter vector. 
Intuitively this results from the fact that the curvature of the log likelihood function decreases with scale $s$ of the parameter vector. Indeed, for large values of $s$, partial changes $\partial \widehat {\mathbf w} = {\widehat{\mathbf w}} - {\mathbf w}^\star$ in the parameter vector translate to significantly smaller changes of the likelihood function resulting in lack of identifiability. Consequently, unlike the conventional case we inevitably have to suffer the effects of scale for the Poisson case. 



Another aspect of the problem is the high dimensional sparse setting, which leads to fundamental issues such as the Hessian being singular \cite{cite20,Sara}. To overcome these issues we follow along the lines of sparse linear regression and consider optimizing the Poisson likelihood function under $\ell_1$ constraints. These constraints have the effect of constraining the error patterns to a cone of feasible directions (descent cone):
\begin{equation} \label{OrigCone}
\widehat{\mathbf w} - {\mathbf w}^\star \in \mathbb{C} := \{{\mathbf u} : \|{\mathbf u}_{S^c}\|_1 \leq \|{\mathbf u}_S\|_1, |S| \leq k\}
\end{equation}
As a result it turns out that we need to ensure that the loss function is ``well-behaved'' only on the feasible cone. A recent important development~\cite{cite10} in this context is to impose strong convexity of the loss function on the feasible cone. Given suitable assumptions on the design matrix ${\mathbf A}$, this requirement is generally satisfied for many loss functions including least-squares losses. 

Unfortunately as it turns out, our Poisson case does not lead to strongly convex losses. Strong-convexity of the loss function amounts to the assumption that we see a non-trivial change in the loss function as a result of underlying parameter variation regardless of the ground truth ${\mathbf w}^\star$. This requirement can be viewed as a need for the curvature of the loss function to be non-vanishing on the cone. In the Poisson case, the perturbation in the loss function behaves linearly in large $s$ regimes (i.e. the curvature vanishes in the limit) and so the loss function is no longer strongly convex on the cone. This issue motivates us to incorporate signal strength $s$ in the definition of plausible error patterns $\mathbb{C}$.
We then characterize the change in loss function at various amplitudes and sparsity levels in terms of the changes in the parameter vector ($\|\partial \widehat {\mathbf w}\| = \epsilon$). We show that the loss function is strongly convex on the newly defined set of plausible error directions, as long as the elements of ${\mathbf A}$ are bounded and ${\mathbf A}$ satisfies the so-called {\it Restricted Eigenvalue (RE)} condition \cite{cite10}. 
A wide variety of sensing matrices ranging from deterministic to random designs can be shown to satisfy RE condition.
In particular our results apply to both deterministic and random sensing matrices and we present several results for both cases. We also conduct several synthetic and real-world experiments and demonstrate the  tightness of the oracle bounds on error as well as the efficacy of our method. Specifically, it has been suggested in the literature that LASSO can handle exponential family noise such as that arises in our application~\cite{cite5}. It turns out that $\ell_1$ constrained ML estimator uniformly outperforms LASSO and has significantly superior performance in many interesting regimes.

The paper is organized as follows: In Section \ref{PS}, we introduce the notation and state our sparse estimation problem. Section \ref{OA} describes our theoretical results on the convergence of the regularized ML decoder. The numerical results for different interesting scenarios are demonstrated in Section \ref{NR}. Finally, the detailed proof of the main theorems and lemmas are provided in Section \ref{AP}.

\subsection{Related Work}

Parameter estimation for non-identical Poisson distributions has been studied in the context of Generalized Linear Models (GLMs). However, our model is inherently different from the exponential family of GLM models that has been studied in \cite{cite10,cite12,cite14,cite15}. In particular the GLM model corresponding to the Poisson distributed data studied in the literature has the following form:
\begin{equation*}
\text{Model I : }  ~ y_i \sim \Poisson(\exp\left({\mathbf a}_i^{\T}{\mathbf w}^\star\right))
\end{equation*}
Therefore, the log likelihood takes the following form:
\begin{align*}
\mathcal{L}_1({\mathbf w})=\sum_{i=1}^n y_i \left({\mathbf a}_i^{\T} {\mathbf w}\right)-\exp\left({\mathbf a}_i^{\T}{\mathbf w}\right)
\end{align*}
In contrast, in the setting we are interested in, the observations are modeled as follows:  
\begin{equation*}
\text{Model II : } y_i \sim \Poisson(\lambda_{0,i}+{\mathbf a}_i^{\T} {\mathbf w}^\star )
\end{equation*}
and the log likelihood function has the form:
\begin{align*}
\mathcal{L}_2({\mathbf w})=\sum_{i=1}^n y_i \log \left(\lambda_{0,i}+{\mathbf a}_i^{\T} {\mathbf w}\right)-\left(\lambda_{0,i}+{\mathbf a}_i^{\T}{\mathbf w}\right)
\end{align*}

There are several important differences between the two models. We observe that imposing sparsity on ${\mathbf w}$ in Model I corresponds to smaller number of multiplicative terms in the Poisson rates. On the other hand, ${\mathbf w}$ being sparse in Model II results in fewer number of additive terms in the Poisson rates of the corresponding model. 
At a more fundamental level the loss function (negative log-likelihood) for Model I has an exponential term ($\exp\left({\mathbf a}_i^{\T} {\mathbf w} \right)$). The assumptions of strong convexity on the feasible cone $\mathbb{C}$ could be proved if elements of ${\mathbf A}$ are independent sub-gaussian draws \cite{cite10}. Consequently, unlike our case, the issue of signal amplitude no longer arises for this model. 
%
%
%

We can view model I as an instance of a general class of sparse recovery problems. Indeed, \cite{cite14} studies the convergence behavior of $\ell_1$ regularized ML estimation for exponential family distributions and GLM in this context. The bounds on error for sparse recovery of the parameter are based on the RE condition. Moreover, in order to get useful bounds on estimation error of GLM, they additionally need the natural sufficient statistic of the exponential family to be sub-gaussian. This condition could clearly be violated in our setting where the data is Poisson distributed and there is no constraint on the sensing matrix to be sub-gaussian.

More generally, \cite{cite10} describes a unified framework for  analysis of regularized $M-$ estimators in high dimensions. They also mention extension of their framework to GLMs and describe ``strong convexity" of the objective function on $\mathbb{C}$ as a sufficient condition to obtain consistency of M-estimators under Model I. As described earlier, this requirement of strong convexity on $\mathbb{C}$ is not consistent with our model. In addition the statistical aspects in that work requires that the components of the sensing matrix be characterized by \textit{sub-gaussian distributions}, which we do not require here. 



Statistical guarantees for sparse recovery in settings similar to model II have been provided in \cite{cite11,cite112,cite113} in the context of photon limited measurements. They assume that the observations are distributed as follows  
$$y_i\sim \Poisson({\mathbf a}_i^{\T}{\mathbf w}^\star)$$
where elements of the signal ${\mathbf w}^\star$ and sensing matrix are positive, and the sensing matrix satisfies the so-called Flux Preserving assumption:
$$ \sum_{i=1}^{n} {\mathbf a}_i^\top {\mathbf w}^\star \leq \| {\mathbf w}^\star \|_1 .$$

The latter assumption arises in some photon counting applications, like imaging under Poisson noise, where the total number of expected measured photons cannot be larger than the intensity of the original signal. The upper bound on reconstruction error of the constrained ML estimator is given in the paper \cite{cite112}. Surprisingly, the upper bound scales linearly with the number of measurements. However, this sounds reasonable under the Flux Preserving assumption. In fact this behavior is due to the fact that for a fixed signal intensity, more measurements lead to lower SNR for each observation. As a result, unlike conventional compressive sensing bounds, the estimates do not converge to the ground truth with increasing the sample size.
Nevertheless, Flux Preserving constraint does not arise in our setting and consequently the application and methods of analysis are different.

In summary the fundamental differences in the underlying model as well as the different assumptions in the sensing matrices from the previous work warrants new analysis techniques which is the subject of this paper.  



\subsection{Applications}

In the sequel, we will introduce two applications, which motivate the model described earlier: \\ 

{\bf 1. Explosive Identification:} In the explosive identification example, an unknown mixture of explosives is exposed to different  fluorophores. The goal is then to estimate the mixture components based on the observed fluorophores photon counts. Here $y_i$ and $\lambda_{0,i}$ could be considered as the photon counts and background emission rates for fluorophore $i$. $\lambda_{0,i}$ is measured before the exposure of the explosive mixture and is known. $a_{ij} < 0$ represents a quenching effect of explosive $j$ on fluorophore $i$, and $w^\star_j$ is the weight of explosive $j$ in the mixture~\cite{cite6}. \\

{\bf 2. eMarketing:} In the eMarketing example~\cite{cite13}, weekly traffic of different websites within the same market are measured. The traffic of site $i$, denoted by $y_i$, is assumed to be affected by the number of bought links to it on different advertisement websites. Each advertisement website $j$ is also assumed to contribute to the visiting rate by a fixed weight $w^\star_j$. These weights could be viewed as measures of popularity/dominance of advertisement website $j$. Moreover, $\lambda_{0,i}$ is the average traffic that visits website $i$ directly (not through intermediate advertisement website) and is acquired through online statistics of the website. $a_{i,j} > 0$ is the number of backward links that the business website $i$ has bought from the advertisement website $j$. Estimation of $w^\star_j$'s can lead to discovery of dominant advertisement websites in some industry. Fig. \ref{MarketingFig} illustrates this eMarketing model.

\begin{figure}
\begin{minipage}[b]{1\linewidth}
  \centering
  \centerline{\includegraphics[width= 8cm]{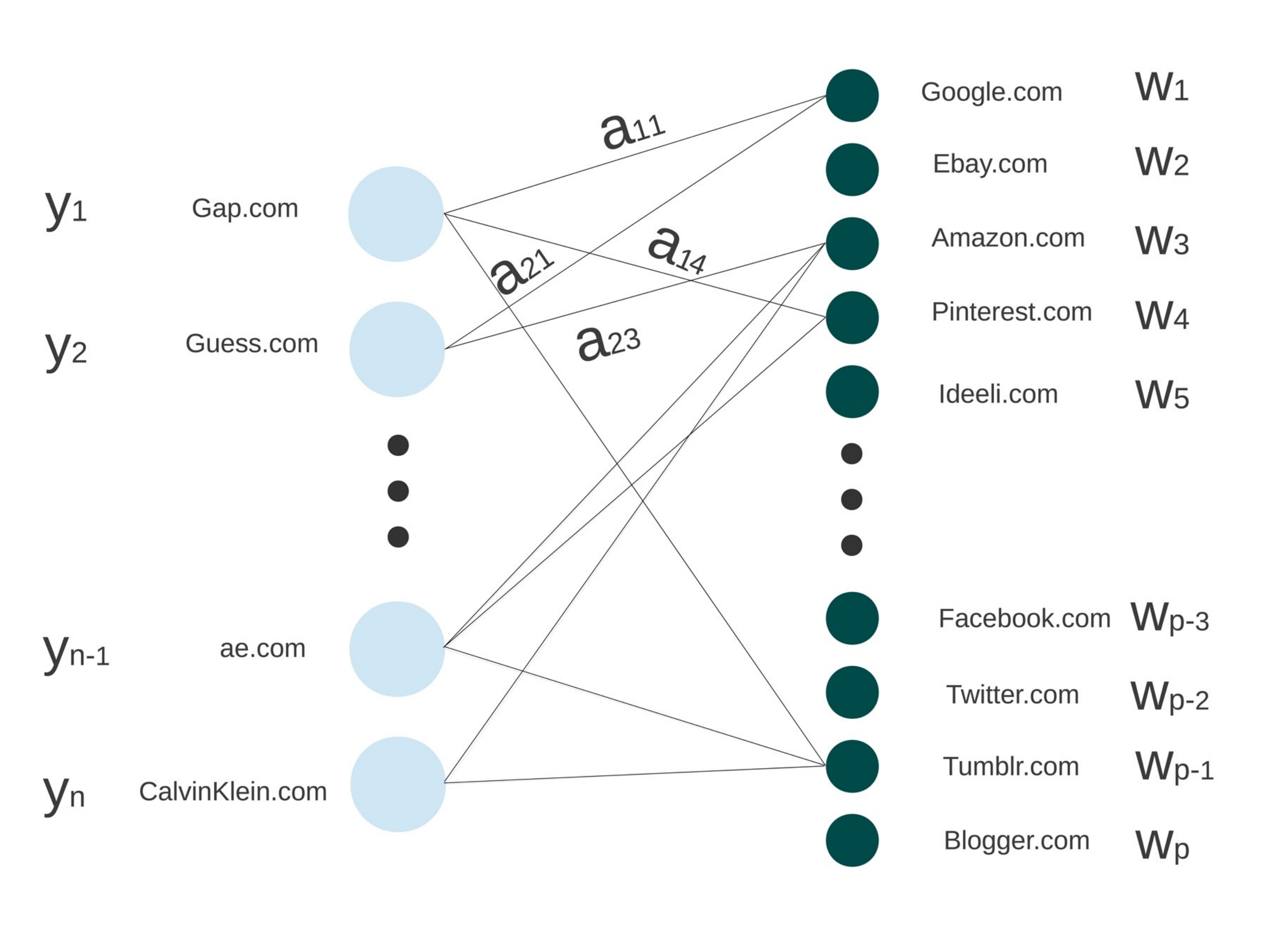}}
\caption{ Illustration of eMarketing model. Node on the left are the business websites, Nodes on the right are the advertisement websites. A connecting edge, $a_{ij}$, is the number of backward links purchased by the business website $i$ from the advertisement website $j$.
}
\label{MarketingFig}
\end{minipage}
\end{figure}


\section{Definitions and Problem Setup} \label{PS}
\noindent
{\bf Problem Definition}:
Let $y_i \sim \Poisson(\lambda_i({\mathbf w}^\star))$, with $\lambda_i({\mathbf w}) := \lambda_{0, i} + {\mathbf a}_i^{\top} {\mathbf w}$ and \hbox{${\mathbf a}_i, {\mathbf w}^\star \in \mathbb{R}^{p}_{+}$.} Furthermore, $\| {\mathbf w}^\star \|_1 = s$ and $ \| {\mathbf w}^\star \|_0 = k $. The main objective of this paper is to analyze the $\ell_2$ error of the maximum likelihood estimation of ${\mathbf w}^\star$ given ${\mathbf y} := (y_1, \ldots, y_n)$ and $s$. \\

\noindent
{\bf Likelihood Function}:
Let the objective function be the negative log likelihood of the data given the unknown parameter ${\mathbf w}$:
 $$Q({\mathbf w}) := \frac{1}{n} \sum_{i = 1}^{n} -y_i \log\lambda_i({\mathbf w}) + \lambda_i({\mathbf w}).$$ 
We will let $\lambda_{\min} := \min_{i} \lambda_{i}({\mathbf w}^\star) $, $\lambda_{\max} := \max_i \lambda_{0, i} + a_{\max} s$, $\overline{\lambda}_h := (1/n \sum_{i=1}^n 1/\lambda_i({\mathbf w}^\star))^{-1}$ (the harmonic average of the rates), $a_{\max} := \max_{i,j} {a}_{i,j} $, and ${\mathbf A} := [ {\mathbf a}_1, \ldots, {\mathbf a}_n]^\top$ throughout the paper. 

The guarantees on the error of the maximum likelihood estimator relies on an assumption on the sensing matrix $\mathbf A$ known as {\it Restricted Eigenvalue Condition}:

\begin{definition} \label{RECond} {{\bf Restricted Eigenvalue condition}}:
A matrix ${\mathbf A} \in \mathbb{R}^{n \times p}$ satisfies RE$(k, \gamma_k)$, if
there exists a constant $\gamma_k > 0$, such that for any set of indices $S$ satisfying $|S| \leq k$, and vector 
$${\mathbf u} \in \mathbb{C}(S) := \{ {\mathbf u} \neq {\mathbf 0} : \lVert {\mathbf u}_S \rVert_1 \geq \lVert {\mathbf u}_{S^c} \rVert_1 \},$$
we have:
\begin{equation} \label{REC}
 \frac{1}{n} \| {\mathbf A} {\mathbf u}\|_2^2\geq \gamma_k \| {\mathbf u} \|^2_2
\end{equation}
where ${\mathbf u}_S$ is the restriction of the vector ${\mathbf u}$ to the indices in $S$, and {$S^c = \{1, \ldots, p \} \setminus S$.}
\end{definition}

\section{Main Results}
\label{OA}

\subsection{Upper Bound on $\ell_2$ Error}
We will analyze the performance of $\ell_1$ constrained Max-Likelihood estimator to obtain upper bounds. Specifically, we consider the following estimator:
\begin{equation} \label{eq:opt1}
\widehat{\mathbf w} := \argmin_{\sum_{i} {w}_i \leq s, ~ \forall i ~:~ {w}_i \geq 0} Q({\mathbf w})
\end{equation}

\begin{theorem} \label{thm:main}
Supose ${\mathbf A}$ satisfies RE$(k, \gamma_k)$, and its elements are positive and bounded by $a_{\max}$. Then, with probability at least $1 - \zeta$, the following error bound holds:
$$ \|  \widehat{\mathbf w}  - {\mathbf w}^\star \|_2 \leq  \frac{54 \lambda_{\max} a^2_{\max} \left(  3 +  \log\frac{\lambda_{\max}}{\lambda_{\min}}  \right)}{\gamma_k} \sqrt{\frac{k \log(2/\zeta)}{\overline{\lambda}_h n}},$$
where $\zeta \geq 2 \exp\left(\frac{-n \lambda_{\min} \min(1, \lambda_{\min}) }{4 \overline{\lambda}_h }\right)$.
\end{theorem}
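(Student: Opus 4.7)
The plan is to follow the basic-inequality plus restricted-strong-convexity paradigm (in the spirit of Negahban et al.\ \cite{cite10}), but with the curvature lower bound tailored to the non-strongly-convex Poisson likelihood. First I would verify that the error $\widehat{\mathbf u} := \widehat{\mathbf w} - {\mathbf w}^\star$ lies in the cone $\mathbb{C}(S)$ with $S := \supp({\mathbf w}^\star)$: since $\|\widehat{\mathbf w}\|_1 \le s = \|{\mathbf w}^\star\|_1$ and ${\mathbf w}^\star_{S^c} = {\mathbf 0}$, a routine manipulation gives $\|\widehat{\mathbf u}_{S^c}\|_1 \le \|\widehat{\mathbf u}_S\|_1$ and hence $\|\widehat{\mathbf u}\|_1 \le 2\sqrt{k}\,\|\widehat{\mathbf u}\|_2$. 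Optimality of $\widehat{\mathbf w}$ then yields $Q(\widehat{\mathbf w}) \le Q({\mathbf w}^\star)$, which I would rewrite via Taylor's theorem with integral remainder as
\[
0 \ \ge \ \langle \nabla Q({\mathbf w}^\star),\widehat{\mathbf u}\rangle \;+\; \int_0^1 (1-t)\,\widehat{\mathbf u}^\top \nabla^2 Q({\mathbf w}^\star + t\widehat{\mathbf u})\,\widehat{\mathbf u}\,dt.
\]
The remainder of the proof hinges on upper-bounding the first-order term and lower-bounding the second-order term on the cone.

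For the first-order term, $\nabla Q({\mathbf w}^\star) = -\frac{1}{n}\sum_i (y_i - \lambda_i({\mathbf w}^\star))\,{\mathbf a}_i/\lambda_i({\mathbf w}^\star)$ is a zero-mean sum of independent bounded centered-Poisson increments. The coordinate-wise variance $\frac{1}{n^2}\sum_i a_{ij}^2/\lambda_i({\mathbf w}^\star)$ is bounded by $a_{\max}^2/(n\,\overline{\lambda}_h)$, and this is exactly how the harmonic mean $\overline{\lambda}_h$ enters the final rate. A Bernstein-type tail bound combined with H\"older's inequality (using $\|\widehat{\mathbf u}\|_1 \le 2\sqrt{k}\,\|\widehat{\mathbf u}\|_2$) then yields, on an event of probability at least $1-\zeta/2$,
\[
|\langle \nabla Q({\mathbf w}^\star),\widehat{\mathbf u}\rangle|\ \lesssim\ a_{\max}\sqrt{\frac{k\log(2/\zeta)}{n\,\overline{\lambda}_h}}\,\|\widehat{\mathbf u}\|_2.
\]

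The hard part will be the restricted-curvature lower bound. The Hessian is $\nabla^2 Q({\mathbf w}) = \frac{1}{n}\sum_i y_i\,{\mathbf a}_i{\mathbf a}_i^\top/\lambda_i({\mathbf w})^2$, and along the segment from ${\mathbf w}^\star$ to $\widehat{\mathbf w}$ the rates satisfy $\lambda_i({\mathbf w}) \le \lambda_{\max}$, so
\[
\widehat{\mathbf u}^\top \nabla^2 Q({\mathbf w}^\star + t\widehat{\mathbf u})\,\widehat{\mathbf u}\ \ge\ \frac{1}{n\lambda_{\max}^2}\sum_i y_i\,({\mathbf a}_i^\top \widehat{\mathbf u})^2.
\]
I would then replace $y_i$ by its mean $\lambda_i({\mathbf w}^\star)$ via Poisson concentration and invoke RE$(k,\gamma_k)$ on the resulting deterministic quantity $\frac{1}{n}\sum_i \lambda_i({\mathbf w}^\star)({\mathbf a}_i^\top\widehat{\mathbf u})^2 \ge \lambda_{\min}\gamma_k\|\widehat{\mathbf u}\|_2^2$. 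Two subtleties make this step the main obstacle: (i) the lower bound must hold uniformly on the feasible cone even though the summands are data-dependent, which forces a localized concentration argument and is the source of the tail constraint $\zeta \ge 2\exp(-n\lambda_{\min}\min(1,\lambda_{\min})/(4\overline{\lambda}_h))$; and (ii) the crude replacement $\lambda_i({\mathbf w})^{-2} \ge \lambda_{\max}^{-2}$ is wasteful when the Poisson rates are heterogeneous, and the $\log(\lambda_{\max}/\lambda_{\min})$ appearing in the theorem is the signature of a dyadic peeling over level sets $\{i : \lambda_i({\mathbf w}^\star) \in [2^j\lambda_{\min}, 2^{j+1}\lambda_{\min})\}$, combined with a union bound over the $O(\log(\lambda_{\max}/\lambda_{\min}))$ levels. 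With both bounds in place, inserting them into the basic inequality gives a quadratic-in-$\|\widehat{\mathbf u}\|_2$ relation whose solution yields exactly the claimed rate after tracking $a_{\max}^2$, $\lambda_{\max}$, and the log factor through the curvature denominator.
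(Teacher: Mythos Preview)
Your overall architecture --- cone membership of $\widehat{\mathbf u}$, the basic inequality $Q(\widehat{\mathbf w})\le Q({\mathbf w}^\star)$, a Bernstein bound on $\|\nabla Q({\mathbf w}^\star)\|_\infty$, and a restricted curvature lower bound --- is the same as the paper's. The gap is in the curvature step, and your account of where the $\log(\lambda_{\max}/\lambda_{\min})$ factor comes from is not what the paper does.

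Your Hessian route bounds $\lambda_i({\mathbf w}^\star+t\widehat{\mathbf u})\le\lambda_{\max}$ uniformly, then replaces $y_i$ by $\lambda_i({\mathbf w}^\star)$, then lower-bounds $\lambda_i({\mathbf w}^\star)\ge\lambda_{\min}$ before invoking RE. This yields curvature $\kappa\asymp\lambda_{\min}\gamma_k/\lambda_{\max}^2$, whereas the paper obtains $\kappa=\gamma_k/(9\lambda_{\max})$; your route therefore loses a full factor $\lambda_{\max}/\lambda_{\min}$ in the final error bound, not a logarithm. Dyadic peeling over level sets of $\lambda_i({\mathbf w}^\star)$ does not repair this: the RE constant $\gamma_k$ is a property of the full matrix ${\mathbf A}$ and does not survive restriction to row subsets, and in any case the bottleneck is the uniform bound $\lambda_i({\mathbf w}^\star+t\widehat{\mathbf u})\le\lambda_{\max}$ along the segment, which peeling on $\lambda_i({\mathbf w}^\star)$ does not tighten.

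The paper works instead with the explicit remainder $\delta Q=\tfrac{1}{n}\sum_i y_i d_i$, $d_i=-\log\bigl(1+{\mathbf a}_i^\top{\bm\Delta}/\lambda_i({\mathbf w}^\star)\bigr)+{\mathbf a}_i^\top{\bm\Delta}/\lambda_i({\mathbf w}^\star)$, and splits $\delta Q=\delta Q_1+\delta Q_2$ with $\delta Q_1=\tfrac{1}{n}\sum_i\lambda_i({\mathbf w}^\star)d_i$ and $\delta Q_2=\tfrac{1}{n}\sum_i(y_i-\lambda_i({\mathbf w}^\star))d_i$. For $\delta Q_1$ the key device is the monotonicity $a\le b\Rightarrow a\log(1+x/a)\le b\log(1+x/b)$, which replaces every $\lambda_i({\mathbf w}^\star)$ by $\lambda_{\max}$ \emph{inside the logarithm}; one Taylor expansion and RE then give $\delta Q_1\ge\gamma_k\|{\bm\Delta}\|_2^2/(9\lambda_{\max})$ with no $\lambda_{\min}$ in the constant. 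For $\delta Q_2$ the paper factors $|\delta Q_2|\le\bigl(\max_{i,\,{\bm\Delta}\in\mathbb{C}_s}\lambda_i({\mathbf w}^\star)d_i\bigr)\cdot\bigl(\tfrac{1}{n}\sum_i|y_i/\lambda_i({\mathbf w}^\star)-1|\bigr)$. The first factor is purely deterministic, and it is \emph{here} that $\log(\lambda_{\max}/\lambda_{\min})$ appears, from bounding $|\log(\lambda_i({\mathbf w})/\lambda_i({\mathbf w}^\star))|$ when ${\mathbf a}_i^\top{\bm\Delta}<0$; the second factor is a single scalar, so one application of Bernstein suffices and yields both the $\overline{\lambda}_h$ dependence and the constraint on $\zeta$. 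Thus both of your flagged subtleties --- uniform-over-cone concentration and the source of the log factor --- are sidestepped: no uniformity is needed, and the log lives in the tolerance term $\tau$, not in a peeled union bound.
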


\noindent 
{\bf Mean Squared Error:} Note that since the rates are uniformly bounded in terms of the scale parameter $s$ and the error probability can be made arbitrarily small, we can obtain an upper bound for the mean squared error (MSE) in a straightforward manner. The MSE upper bounds has an identical scaling with respect to sparsity, scale and the number of measurements.

\noindent
{\bf Upper Bound for Special Cases}: Consider the case where base rates $\lambda_{0, i}$ are constant and so $\lambda_{\max} = \mathcal{O}(s)$. For this scenario we have with probability $1-\zeta$ that,
$$\|  \widehat{\mathbf w}  - {\mathbf w}^\star \|_2 = \mathcal{O}\left ({s \log s \over \gamma_k } \sqrt{{k \log(2/\zeta)}/{n}}\right )$$ 
In addition if we also have $\lambda_{\min} \geq c \lambda_{\max} $ for a constant $c$, so that the harmonic mean $\overline{\lambda}_h = \mathcal{O}(s)$ and $\log \lambda_{\max} / \lambda_{\min} = \mathcal{O}(1)$, then, it follows that the error behaves as 
$$\|  \widehat{\mathbf w}  - {\mathbf w}^\star \|_2 = \mathcal{O}\left ({\sqrt{{s k \log(2/\zeta)}/{n}}  \over \gamma_k} \right  )$$

\noindent
{\bf Other Distributions}: The techniques in the proof could be used to extend this result to other heavy tailed distributions such as Gamma distribution, when the scale parameter is an affine function of ${\mathbf w}^\star$. Similar to the Poisson case, the curvature of the loss function associated with the log-likelihood of Gamma distribution vanishes with scale $s$. 

\noindent
{\bf Additional Linear Constraints}: The proof could be easily modified when there are additional {\it linear} constraints on the $\lambda_i({\mathbf w})$ values. In that case, the constraints should be included in the set $\mathbb{C}_s$. Hence, the results can be generalized to the case where $a_{i,j} < 0$, so long as the Poisson rates are constrained to be positive. 

\noindent
{\bf Brief Proof Outline}: The main idea of our approach is that we can show that the $\ell_2$ error can be bounded by $\epsilon$ if no feasible perturbation of magnitude $\epsilon$ around the true parameter value can lead to a decrease in the objective function $Q$. Furthermore, if $Q$ is strongly convex around ${\mathbf w}^\star$, an appropriate upper bound on $\epsilon$ makes the change in the objective function positive for all plausible error patterns. Finally, the upper bound on $\epsilon$ also yields an upper bound on the $\ell_2$ error. As the loss function associated with Poisson distributed data is not strongly convex on $\mathbb{C}$, we replace the cone $\mathbb{C}$ (Eq. \eqref{OrigCone}) by another set $\mathbb{C}_s$, which depends on $s$. This leads to a loss function that is well-behaved and subsequently we can derive our results.

\subsection{Sparsity Scaling}

Theorem~\ref{thm:main} describes an upper bound for estimation error for matrices belonging to RE$(k, \gamma_k)$. Since $\gamma_k$ in turn depends on sparsity $k$, the dependence of estimation error on sparsity is implicit. One difficulty in making this dependence explicit is that RE condition for matrices that are positive and bounded has not been extensively studied. The best known bound is due to \cite{Rud10} for positive and bounded matrices, who shows matrix constructions with $n=\Omega(k^2 \log p)$ satisfying RE condition with a constant $\gamma_k$ that is independent of $k$. This in turn implies that for positive bounded matrices it is sufficient to have $n=\Omega(k^2 \log p)$ measurements for our estimation error bounds to hold. 

Nevertheless, this introduces a seemingly large gap between sparse linear regression results where the sample complexity grows linearly with sparsity and our corresponding Poisson model. It is unclear whether this gap is due to matrix construction or our proof bounding technique. To bridge this gap we consider a different matrix construction and a variant of our optimization problem\footnote{We thank the anonymous reviewer for suggesting this scheme.}. 

In particular, we construct our matrix based on another elementary matrix ${\mathbf A}_g$. The components of ${\mathbf A}_g$ are i.i.d. instantiations of a bounded sub-gaussian random variable, whose minimum and maximum values are denoted as $-a_{\wedge} < 0$ and $a_{\vee} > 0$, respectively. The main reason for this construction is that we can now obtain designs with linear scaling, namely, for $n = \mathcal{O}(k \log p)$ it is possible to construct matrices with a constant factor $\gamma$ bound \cite{ShuZh09} in Eq.~\ref{REC}. We emphasize that it is important here for  ${a}_{i,j}$ to take both positive and negative values for this to hold.

We now construct matrices ${\mathbf A}$ based on ${\mathbf A}_g$, namely,  
\begin{equation} \label{AgC}
{\mathbf A} = {\mathbf A}_g + a_{\wedge} {\mathbf 1}_{n \times p}
\end{equation}
where ${\mathbf 1}_{n \times p}$ denotes an all one matrix of size $n \times p$. Finally, we consider a variant of our $\ell_1$ constrained problem:
\begin{equation} \label{eq:opt2}
\widehat{\mathbf w} := \argmin_{\sum_{i} {w}_i = s, ~\forall i ~:~ {w}_i \geq 0} Q({\mathbf w}) 
\end{equation}
The main difference between Eq.~\ref{eq:opt1} and Eq.~\ref{eq:opt2} is that the constraint $\| {\mathbf w} \|_1 \leq s$ is replaced by $ \| {\mathbf w} \|_1 = \sum_i w_i = s$. While this equality constraint is somewhat more restrictive it allows us the flexibility of utilizing matrices with negative components. Furthermore, it bridges the gap between the sparse linear regression setting and this problem as shown in the following corollary.
\begin{corollary}
Let $n \geq C k \log p$, $a_{\max} := \max(a_{\vee}, a_{\wedge})$, and ${\mathbf A}$ be constructed according to Eq. \eqref{AgC}, then the estimator $\widehat{\mathbf w}$, of Eq.~\ref{eq:opt2} satisfies the following bound:
$$ \|  \widehat{\mathbf w}  - {\mathbf w}^\star \|_2 \leq c_0 \lambda_{\max} a^2_{\max}  \sqrt{\frac{k \log\left({2}/{\zeta}\right)}{\overline{\lambda}_h n}} \left(  3 +  \log \frac{\lambda_{\max}}{\lambda_{\min}}  \right),$$
with probability of at least $1 - \zeta - c_1 \exp(-c_2 n)$, where $C$, $c_0$, $c_1$, and $c_2$ are positive universal constants, and $\zeta$ satisfies the conditions of Theorem 1.
\end{corollary}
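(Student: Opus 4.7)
The plan is to reduce the corollary to Theorem~\ref{thm:main} by showing that, under the modified equality constraint $\sum_i w_i = s$, the rank-one shift $a_{\wedge}\mathbf{1}_{n\times p}$ is invisible to all feasible error vectors, so the RE property needed by the theorem is inherited from $\mathbf{A}_g$ at the favorable rate $n=\Omega(k\log p)$.

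\textbf{Step 1 (Annihilation of the constant shift).} Since both $\widehat{\mathbf{w}}$ and $\mathbf{w}^\star$ lie in the feasible set $\{\mathbf{w}\geq \mathbf{0}:\sum_i w_i = s\}$, the error $\mathbf{u}:=\widehat{\mathbf{w}}-\mathbf{w}^\star$ satisfies $\mathbf{1}^\top \mathbf{u}=0$. Consequently $\mathbf{1}_{n\times p}\mathbf{u}=\mathbf{0}$ and hence $\mathbf{A}\mathbf{u}=\mathbf{A}_g\mathbf{u}$ for every feasible $\widehat{\mathbf{w}}$. All terms of the form $\|\mathbf{A}\mathbf{u}\|_2^2/n$ that appear in the proof of Theorem~\ref{thm:main} can therefore be replaced by $\|\mathbf{A}_g\mathbf{u}\|_2^2/n$.

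\textbf{Step 2 (RE for the sub-gaussian matrix).} I would invoke the standard RE guarantee for i.i.d.\ sub-gaussian designs (the result cited in the paper, \cite{ShuZh09}; alternatively Raskutti--Wainwright--Yu): whenever $n\geq Ck\log p$, the matrix $\mathbf{A}_g$ satisfies RE$(k,\gamma)$ for some universal $\gamma>0$ with probability at least $1-c_1\exp(-c_2 n)$. Combined with Step~1, this delivers the lower bound $\frac{1}{n}\|\mathbf{A}\mathbf{u}\|_2^2 \geq \gamma \|\mathbf{u}\|_2^2$ uniformly over all feasible errors whose support configuration matches the cone definition used in Theorem~\ref{thm:main}.

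\textbf{Step 3 (Cone membership of the error).} I need to verify that $\mathbf{u}$ still lies in the cone~\eqref{OrigCone} (or the $s$-dependent refinement $\mathbb{C}_s$ described in the proof outline). Let $S=\mathrm{supp}(\mathbf{w}^\star)$, so $|S|\leq k$. For $i\in S^c$ we have $w^\star_i=0$ and $\widehat{w}_i\geq 0$, hence $u_i\geq 0$ and $\|\mathbf{u}_{S^c}\|_1=\sum_{i\in S^c}u_i$. Using $\mathbf{1}^\top \mathbf{u}=0$ gives $\sum_{i\in S^c}u_i=-\sum_{i\in S}u_i\leq \|\mathbf{u}_S\|_1$, so $\|\mathbf{u}_{S^c}\|_1\leq\|\mathbf{u}_S\|_1$. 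Thus cone membership is automatic from the equality constraint and positivity, without relying on the optimality-based dual certificate that Theorem~\ref{thm:main} uses for the inequality constraint.

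\textbf{Step 4 (Transport of the Theorem~\ref{thm:main} bound).} The rest of the argument of Theorem~\ref{thm:main}---the concentration of the score vector, the second-order expansion of $Q$ around $\mathbf{w}^\star$ and the strong convexity on $\mathbb{C}_s$---depends on $\mathbf{A}$ only through (a) the entrywise bound $\|\mathbf{A}\|_{\max}\leq a_{\max}=\max(a_\vee,a_\wedge)$, which remains valid after the shift, and (b) the RE lower bound, which Steps~1--3 provide with constant $\gamma$. Plugging these inputs into the conclusion of Theorem~\ref{thm:main} yields exactly the stated bound, and a union bound over the two failure events (the probability-$\zeta$ event from Theorem~\ref{thm:main} and the probability-$c_1\exp(-c_2 n)$ event for RE of $\mathbf{A}_g$) gives the stated probability of success.

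The main obstacle I expect is purely bookkeeping: making sure that every place in the proof of Theorem~\ref{thm:main} where $\mathbf{A}$ enters through a quadratic form is covered by $\mathbf{A}_g$ (via Step~1), while every place where $\mathbf{A}$ enters through entrywise magnitudes is covered by the $a_{\max}$ bound on the shifted matrix. No new probabilistic argument beyond the sub-gaussian RE result is required, which is precisely why the construction~\eqref{AgC} bridges the $k^2\log p$ versus $k\log p$ gap.
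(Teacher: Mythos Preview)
Your approach is essentially the paper's: exploit the equality constraint $\sum_i w_i = s$ so that the rank-one shift vanishes on feasible perturbations, inherit the RE constant from $\mathbf{A}_g$, and re-run the proof of Theorem~\ref{thm:main}. The paper phrases this more economically by absorbing $a_{\wedge}s$ into the base rates, i.e., writing $\lambda_i(\mathbf{w}) = (\lambda_{0,i} + a_{\wedge}s) + \mathbf{a}^{(g)\top}_i \mathbf{w}$ for every feasible $\mathbf{w}$ and then repeating Theorem~\ref{thm:main}'s argument with design matrix $\mathbf{A}_g$ directly; this framing makes it automatic that the relevant entrywise bound is $\max(a_{\vee},a_{\wedge})$, whereas your Step~4 claim that $\|\mathbf{A}\|_{\max}\leq\max(a_{\vee},a_{\wedge})$ for the \emph{shifted} matrix is false (it equals $a_{\vee}+a_{\wedge}$)---a slip that costs at most a factor of~2 and is absorbed into $c_0$.
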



\subsection{Minimax Mean Squared Error Lower Bound}

In this section, we derive a lower bound for the mean-squared error and show that the upper bounds obtained in the previous section match our lower bound. Our proof technique is based on a generalization of Fano's inequality~\cite{Bin97}. In particular, we show that no estimation algorithm can have a worst case mean squared error (MSE) less than $\mathcal{O}(\sqrt{sk/n})$:
\begin{theorem}
	Let $\eta$ and $a_{\min}$ be the maximum eigenvalue of ${\mathbf A}/\sqrt{n}$, and minimum element of ${\mathbf A}$, respectively. Furthermore, let $\mathcal{G} := \{ {\mathbf w} : \| {\mathbf w} \|_0 \leq k, ~ \| {\mathbf w} \|_1 \leq s, ~ \forall i ~:~ w_i \geq 0 \}$ be the set of feasible parameter values. If $n \geq C a_{\min} (k - 1)^2 / (s \eta)$ and $k \geq 9$, the minimax error rate is bounded as follows:
	$$ \inf_{\widehat{\mathbf w}} \sup_{{\mathbf w}^\star \in \mathcal{G}} \E(\| \widehat{\mathbf w} - {\mathbf w}^\star \|_2) \geq C^\prime \sqrt{\frac{a_{\min} s k}{n \eta}},$$ 
	where $\widehat{\mathbf w}$ is an arbitrary estimation algorithm, and $C$ and $C^\prime$ are two constants. 
\end{theorem}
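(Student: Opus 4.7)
The plan is to prove this lower bound via the standard information-theoretic route: reduce estimation to a multi-way hypothesis test over a finite packing of $\mathcal{G}$ and apply Fano's inequality. Concretely, I will exhibit a set $\{\mathbf{w}_1,\ldots,\mathbf{w}_M\}\subset\mathcal{G}$ whose members are pairwise $\ell_2$-separated by at least $2\epsilon$ and whose induced Poisson data laws $P_{\mathbf{w}_i}$ satisfy $\max_{i\neq j}\mathrm{KL}(P_{\mathbf{w}_i}\,\|\,P_{\mathbf{w}_j})\leq \tfrac{1}{2}\log M-\log 2$; the standard reduction then yields $\inf_{\widehat{\mathbf{w}}}\sup_{\mathbf{w}^\star\in\mathcal{G}}\E\|\widehat{\mathbf{w}}-\mathbf{w}^\star\|_2\geq \epsilon/2$, and I will calibrate everything so that $\epsilon$ is of order $\sqrt{a_{\min}sk/(n\eta)}$.

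For the packing I will invoke the Gilbert--Varshamov bound on constant-weight binary codes: when $k\geq 9$, there exist $\mathbf{b}_1,\ldots,\mathbf{b}_M\in\{0,1\}^p$, each with exactly $k$ ones, with pairwise Hamming distance $\geq k/2$ and $\log M\gtrsim k\log(p/k)$. Rescale by a parameter $\delta>0$ to obtain $\mathbf{w}_i:=\delta\mathbf{b}_i$: each $\mathbf{w}_i$ is $k$-sparse and nonnegative, has $\|\mathbf{w}_i\|_1=\delta k$, and pairs satisfy $\|\mathbf{w}_i-\mathbf{w}_j\|_2\geq \delta\sqrt{k/2}$. Feasibility $\mathbf{w}_i\in\mathcal{G}$ becomes $\delta k\leq s$. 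I will take $\delta=c\sqrt{a_{\min}s/(n\eta)}$ for a small constant $c$, which turns the feasibility condition into $c^2 a_{\min}k^2/(s\eta)\leq n$, matching the hypothesis $n\geq C a_{\min}(k-1)^2/(s\eta)$ (the $(k-1)$-vs-$k$ discrepancy is absorbed into the constant). With this $\delta$ the pairwise separation is $\delta\sqrt{k/2}\gtrsim \sqrt{a_{\min}sk/(n\eta)}$, which is the target $\epsilon$.

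For the KL control I will use the pointwise inequality $\mathrm{KL}(\Poisson(\lambda_1)\,\|\,\Poisson(\lambda_2))\leq (\lambda_1-\lambda_2)^2/\lambda_2$ (a one-line consequence of $\log(1+x)\leq x$) and sum over the $n$ independent observations, obtaining $\mathrm{KL}(P_{\mathbf{w}_i}\,\|\,P_{\mathbf{w}_j})\leq \sum_{l=1}^n(\mathbf{a}_l^\top(\mathbf{w}_i-\mathbf{w}_j))^2/\lambda_l(\mathbf{w}_j)$. The denominator is bounded below via the positivity of $\mathbf{a}_l$ and $\mathbf{w}_j$ and the definition of $a_{\min}$, giving $\lambda_l(\mathbf{w}_j)\geq a_{\min}\|\mathbf{w}_j\|_1=a_{\min}\delta k$; the numerator sums to $\|\mathbf{A}(\mathbf{w}_i-\mathbf{w}_j)\|_2^2\leq n\eta\|\mathbf{w}_i-\mathbf{w}_j\|_2^2\leq 2n\eta\delta^2 k$, interpreting $\eta$ through the spectral norm of $\mathbf{A}/\sqrt{n}$. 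Combining gives $\max_{i\neq j}\mathrm{KL}\lesssim n\eta\delta/a_{\min}$, so with the chosen $\delta$ the Fano condition $\max\mathrm{KL}\leq \tfrac12\log M$ holds for sufficiently small $c$ as soon as $\log M\gtrsim k$, which the Gilbert--Varshamov construction supplies in the high-dimensional regime $p\gg k$.

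The main technical obstacle is precisely this triple calibration of $\delta$: it must be large enough that the pairwise separation reaches the target rate $\sqrt{a_{\min}sk/(n\eta)}$, small enough that the aggregate KL divergence stays below $\tfrac12\log M$, and consistent with the $\ell_1$-budget $\delta k\leq s$. The lower estimate $\lambda_l(\mathbf{w}_j)\geq a_{\min}\delta k$ is the sole information-geometric link that makes $a_{\min}$ and $s$ enter the minimax rate; it reflects the same vanishing-curvature phenomenon (the Poisson log-likelihood flattens as the scale $s$ grows) that forced $s$ into the matching upper bound of Theorem~\ref{thm:main}, and it is the reason the hypothesis on $n$ takes the particular form $n\gtrsim a_{\min}k^2/(s\eta)$ rather than a plain condition on sparsity alone.
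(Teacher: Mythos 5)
Your overall Fano-plus-packing strategy is the same as the paper's, and your Poisson KL bound $\mathrm{KL}(\Poisson(\lambda_1)\,\|\,\Poisson(\lambda_2))\leq(\lambda_1-\lambda_2)^2/\lambda_2$ is fine, but there is a genuine gap in the calibration step, and it traces back to your choice of hypothesis set. Your hypotheses are pure rescaled codewords ${\mathbf w}_i=\delta{\mathbf b}_i$ with $\|{\mathbf w}_i\|_1=\delta k$, so the only lower bound you have on the rates is $\lambda_l({\mathbf w}_j)\geq a_{\min}\delta k$, \emph{not} anything of order $a_{\min}s$. With $\delta=c\sqrt{a_{\min}s/(n\eta)}$ this gives $\max_{i\neq j}\mathrm{KL}\lesssim n\eta\delta/a_{\min}= c\sqrt{n\eta s/a_{\min}}$, which grows like $\sqrt{n}$ while $\log M\asymp k\log(p/k)$ stays fixed. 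The condition $n\geq C a_{\min}(k-1)^2/(s\eta)$ is a \emph{lower} bound on $n$, so the theorem must hold for arbitrarily large $n$; for such $n$ no constant $c$ makes $\max\mathrm{KL}\leq\tfrac12\log M$, and your claim that ``sufficiently small $c$'' suffices fails. If instead you shrink $\delta$ to satisfy Fano, you are forced to $\delta\lesssim a_{\min}k\log(p/k)/(n\eta)$, and the resulting separation scales like $1/n$ with no $s$ dependence --- strictly weaker than the target $\sqrt{a_{\min}sk/(n\eta)}$ except at the boundary $n\asymp a_{\min}k^2/(s\eta)$.

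The missing idea is the paper's offset construction: every hypothesis carries a large \emph{common} mass, ${\mathbf w}_j=\bigl(s-\delta(k-1)\bigr){\mathbf e}_k+\delta\sum_{t<k}\tau_j(t){\mathbf e}_t$ with $\delta=\tfrac1c\sqrt{a_{\min}s/(n\eta)}$, where the codewords live on only $k-1$ coordinates and the remaining coordinate absorbs nearly the full $\ell_1$ budget $s$. This shared component cancels in every pairwise difference ${\bm\delta}_{ij}$ (so the separation is still $\asymp\delta\sqrt{k}$), but it pushes every rate up to $\lambda_{0,t}+{\mathbf a}_t^\top{\mathbf w}_j\geq a_{\min}s/2$, so the KL denominators are $\Omega(a_{\min}s)$ \emph{independently of $n$}, giving $\max\mathrm{KL}\lesssim n\eta\,\delta^2 k/(a_{\min}s)\asymp k/c^2$, which is dominated by $\log M\geq(k-1)/8$ once $c$ is a large enough constant. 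This is exactly the mechanism by which the scale $s$ (the vanishing-curvature effect you correctly identify) enters the lower bound; your version only captures it when $\delta k\asymp s$, i.e.\ at the minimal sample size, and breaks down beyond it. The sample-size hypothesis in the theorem is needed precisely so that $s-\delta(k-1)\geq s/2$, i.e.\ so the offset coordinate remains nonnegative and large, not (as in your reading) to make $\delta k\leq s$ feasible.
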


\noindent
{\bf Matching Lower Bound}: The error bound in Theorem 2 matches (order-wise) the upper bound in Theorem 1 as long as $\lambda_{\min} = \Omega(s)$. This establishes the fact that $\ell_1$ constrained ML decoder is minimax optimal in the regime where the Poisson rates vary within a constant factor.  \\

\noindent	
{\bf Brief Proof Outline}: Let $\mathcal{G}_M$ be a subset of $\mathcal{G}$ of size $M$. We consider $\mathcal{G}_M$ as a set of hypotheses. It follows from Fano's bound \cite{Bin97} that the worst case MSE of any estimator is lower bounded by $0.5 P_e \delta$, where $P_e$ is the hypothesis test error probability and $\delta$ is the minimum $\ell_2$ distance of any two members of $\mathcal{G}_M$. We are now left to construct an appropriate hypothesis subset. We utilize the Gilbert-Varshamov bound~\cite{Gun11} in this context to construct a subset such that $\delta$ is lower bounded by $\Omega(\sqrt{sk/n})$ for $M = \Omega(\exp(k/8))$. Furthermore, $P_e$ can be lower bounded through the Fano bound by ensuring that KL-divergence of any two observation distributions with parameters in $\mathcal{G}_M$ is no more than $\mathcal{O}(k)$. The detailed proof is provided in the Appendix, Sec. \ref{Thm2Pr}.

\begin{figure}
\begin{minipage}[htb]{1\linewidth}
\centering
\centerline{\includegraphics[scale=0.3, angle=0]{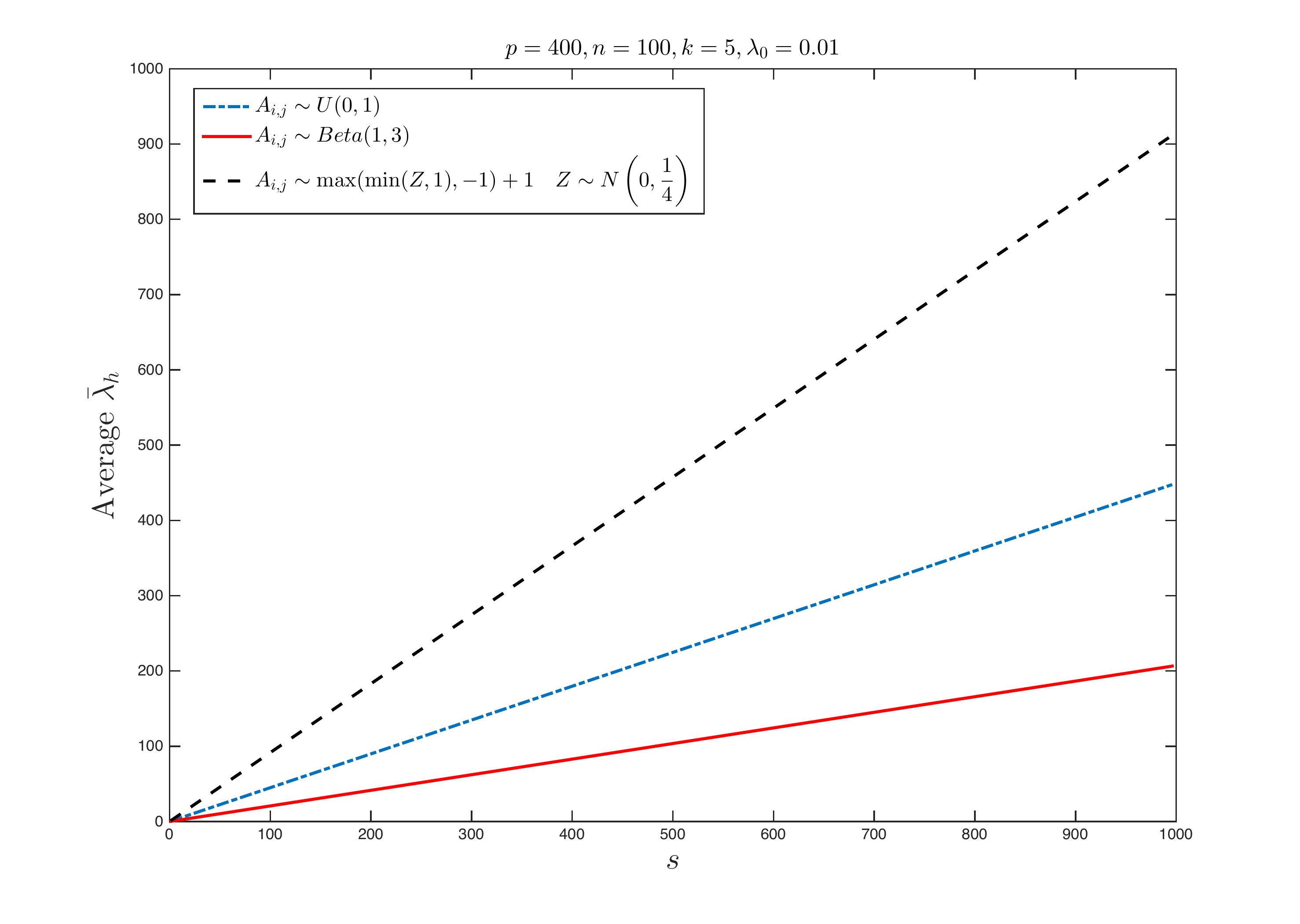}}
\caption{Linear scaling of average $\bar \lambda_h$ w.r.t signal intensity $s$ for three different random constructions of $\mathbf A$.}
\label{lambdahFig}
\end{minipage}
\end{figure}

\section{Numerical Results}
\label{NR}
\subsection{Poisson Rates Harmonic Mean} 
In this section, we consider three different random matrice constructions and verify whether $\bar \lambda_h$ scales with $s$. This linear scaling together with the bounds derived in Theorem 1 and Corollary 1 imply that the error matches the minimax rate $\mathcal{O}(\sqrt{ks/n})$ up to a $\log s$ factor. Specifically, we consider i.i.d. copies of three different distributions to construct $\mathbf{A}$. Namely, uniform distribution on $[ 0, 1 ]$, Beta distribution with $\alpha = 1$ and $\beta = 3$, and a random variable $V$ defined as
\begin{equation} \label{AltDist} 
V = \begin{cases} 0 ; & \text{if } Z < -1 \\ Z + 1 ; & \text{if } -1 \leq Z \leq 1 \\ 2 ; & \text{if } Z > 1 \end{cases}
\end{equation}
where $Z$ is a normally distributed random variable with $\mu = 0$, $\sigma^2 = 0.25$. We take the average of $\bar \lambda_h$ over $50$ Monte Carlo runs. In each run, ${\mathbf A}$ and ${\mathbf w}^\star$ are selected at random. Elements of ${\mathbf w}^\star$ on the support are selected from the uniform distribution on $[ 0, 1]$, and the support is equiprobable across all possible supports. The average value of harmonic mean is plotted against $s$ in Fig. \ref{lambdahFig}. This verifies the linear scaling of $\bar \lambda_h$ w.r.t $s$.

\subsection{Tightness of the Error Bounds} 
\label{TRV}
Theorem~2 characterizes the minimax optimality of $\ell_1$ constrained ML decoder in a certain regime where rates vary within a constant factor. On the other hand the upper bound in Theorem~1 and Corollary~1 scales as ${s \over \sqrt{\bar \lambda_h n}}$. As noted in Fig.~\ref{lambdahFig} the Harmonic mean, $\bar \lambda_h$, generally scales linearly with $s$. Consequently, the upper bound matches our worst-case lower bound. Nevertheless, it is still possible that our upper bounds for ML constrained decoder is conservative for the average case. In this experiment we attempt to show that the upper bounds are indeed tight for the average case as well.


\begin{figure*}
\begin{minipage}[b]{1\linewidth}
\centering
\centerline{\includegraphics[scale=0.25, angle=0]{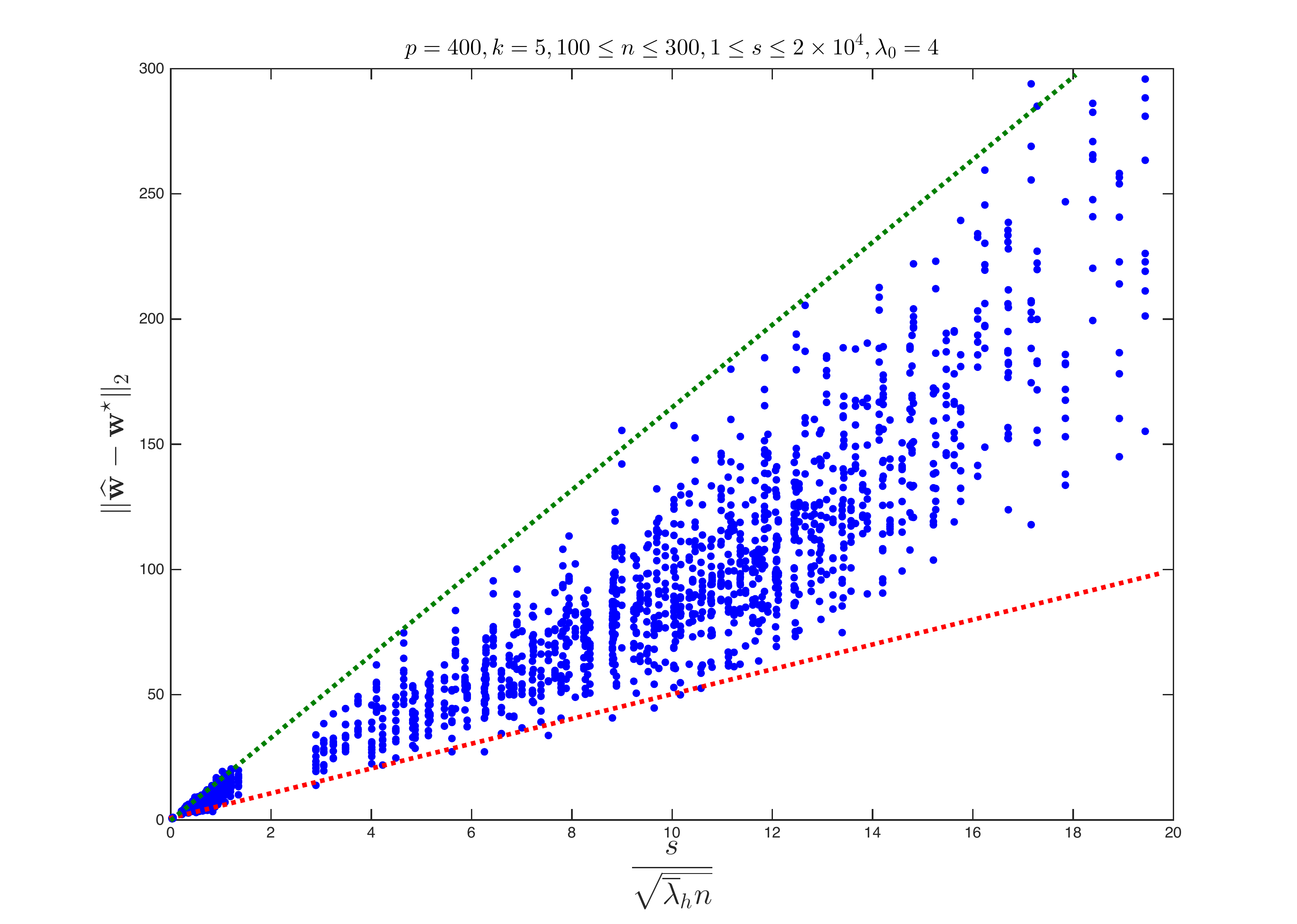}
\includegraphics[scale=0.25, angle=0]{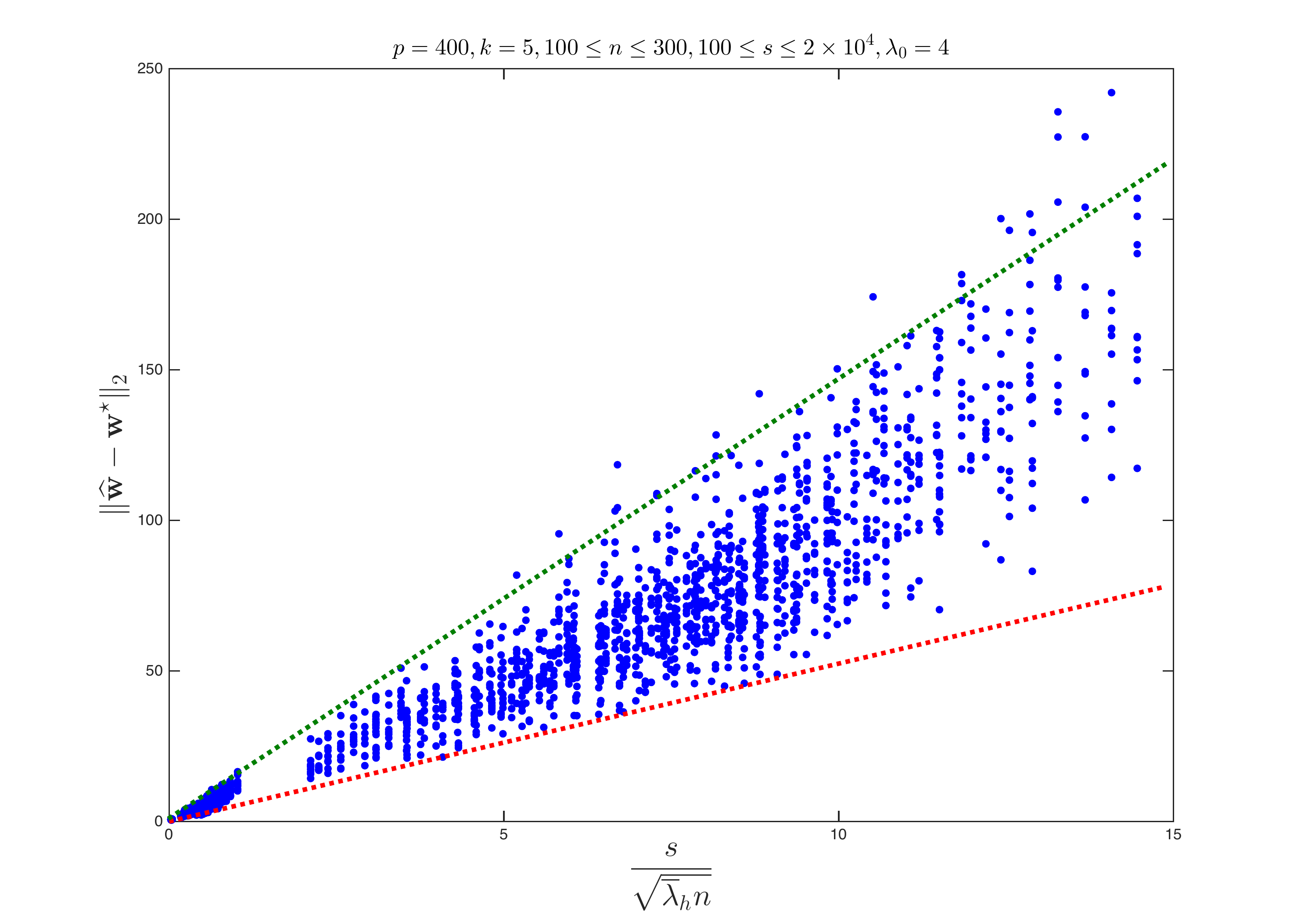}}
\caption{$\ell_2$ error vs. scaled derived error upper bound. This demonstrates that actual error lies within two linear scalings of the derived bound {\it with significant probability}, which in turn shows that derived bound is tight. Entries of ${\mathbf A}$ are drawn from the uniform distribution over $[0, 1]$ (left) and an alternative distribution defined in Eq. \eqref{AltDist} (right).}
\label{Fig1}
\end{minipage}
\end{figure*}

These reasons motivate us to perform experiments with synthetic data to verify the tightness of the derived bound in Theorem 1. Elements of ${\mathbf A}$ are drawn from the uniform distribution over the unit interval. We also consider realizations where the elements are i.i.d. instantiations of a suitably truncated Gaussian random variable defined in Eq. \eqref{AltDist}.

We set the dimensionality $p = 400$, sparsity level $k = 5$, sample size $100 \leq n \leq 300$, signal strength $1 \leq s \leq 2 \times 10^4$, and the base rate $\lambda_0 = 4$. We fix the direction of ${\mathbf w}^\star$ throughout the experiment and change the scale parameter $s$. We also fix ${\mathbf A}$ throughout all the experiments. Therefore, the uncertainty in drawing Poisson distributed data is the only uncertainty across all the experiments. The experiment is repeated (for the same values of ${\mathbf w}^\star$ and ${\mathbf A}$) for $10$ times. Notice that for the truncated normal distribution, we set some components of design matrix to zero. 

Recorded points in all the experiments are then plotted. We use the interior-point optimization algorithm implemented in MATLAB with the termination tolerance value of $10^{-10}$ on the objective function. The termination tolerance on the variables is also set to $10^{-10}$. The gradient of the objective function is also given to the optimization algorithm initially. 

In each run of the experiment, the $\ell_2$ norm of the error is recorded. The $\ell_2$ error is plotted against $s/\sqrt{n \overline{\lambda}_h}$ in Fig. \ref{Fig1}. The hypothetical dashed lines passing through the origin shows that the error scales as $s/\sqrt{n \overline{\lambda}_h}$ with high probability.  This suggests that our upper bounds are tight even for the average case. 
The estimation error is also a function of sparsity level $k$. We next experiment with varying levels of sparsity, $k$, while fixing the scale $s$. These results are consistent with our bounds. This is not surprising because both our lower and upper bounds suggest linear scaling of squared error with $\sqrt{k}$. This has been illustrated in Fig. \ref{FigKEff} for the case that $A_{i,j} \sim U(0, 1)$. We do not consider experiments in which dimensionality $p$ is changed, simply because the derived bounds do not explicitly rely on $p$ as long as RE condition is satisfied. This happens for random constructions, when $n$ is large compared to $k$ and $\log p$, i.e. $n = \mathcal{O}(k \log p)$.

\begin{figure}
\begin{minipage}[b]{1\linewidth}
\centering
\centerline{\includegraphics[scale=0.35, angle=0]{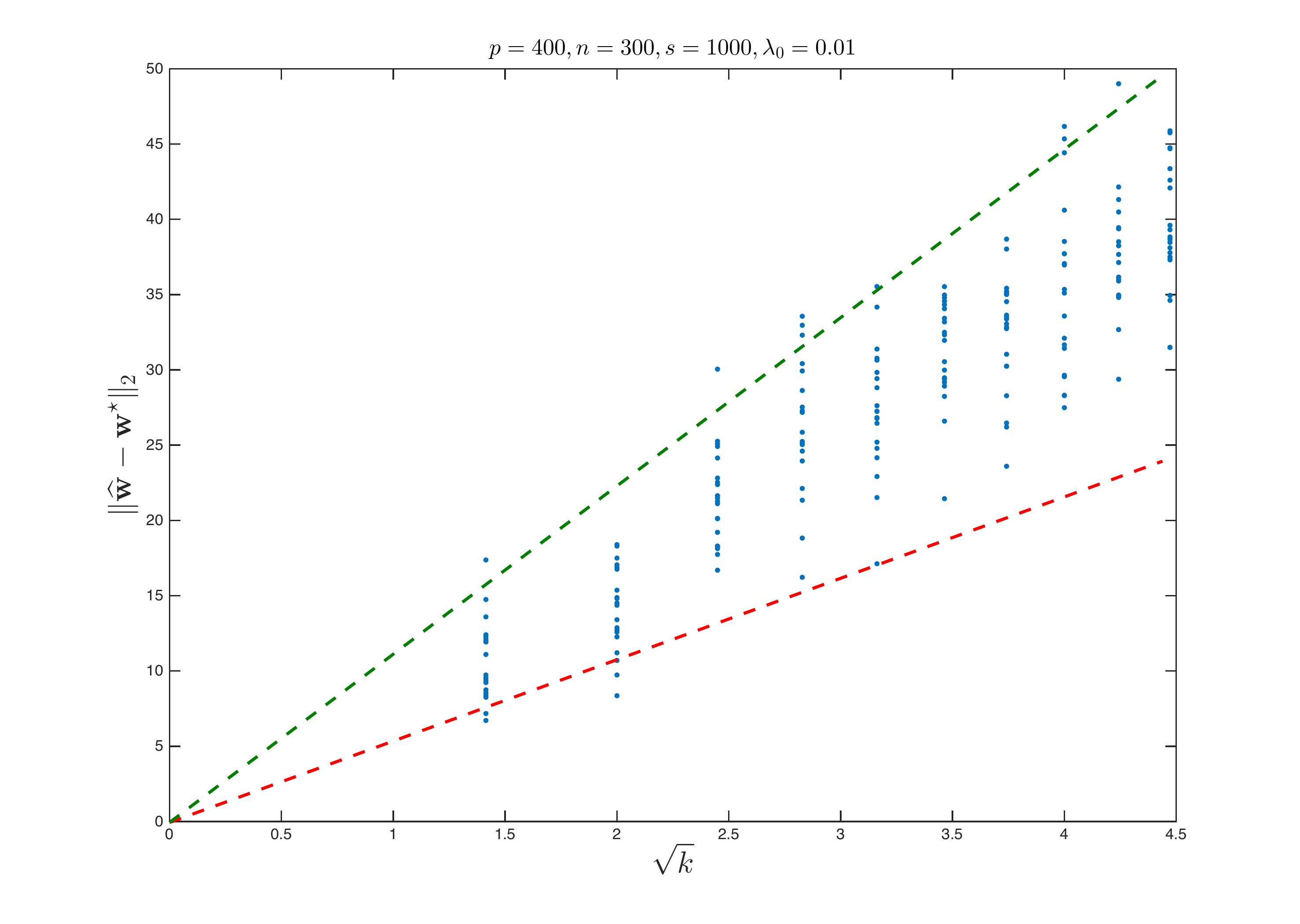}}
\caption{$\ell_2$ error changes with $\sqrt{k}$ when $A_{i,j} \sim U(0, 1)$. This is consistent with the upper and lower bounds in Theorems 1 and 2.}
\label{FigKEff}
\end{minipage}
\end{figure}


\subsection{Gaussian vs. Poisson Model}
In this section, we compare the $\ell_2$ norm error of he constrained maximum likelihood decoder, by using a Gaussian likelihood function even though the observations are generated using a Poisson model. We then compare the resulting estimation error with the estimation error obtained using a Poisson likelihood function. Elements of ${\mathbf A}$ are generated according to the Beta distribution with $\alpha = 1$ and $\beta = 3$, and are sampled independently during 500 Monte Carlo runs. During each run a weight vector ${\mathbf w}^\star \in \mathbb{R}^{400}_+$ is chosen at random, and $n = 100$ Poisson observations are made based on the model described in Sec. \ref{PS}. We set $\lambda_0 = 0.01$ and change $s$ from $1$ to $1000$ or alternatively fix $s = 200$ and change $\lambda_0$ from $0$ to $10$. We plot the average of ratio of $\ell_2$ error in the two cases in Fig. \ref{Figx}. This figure shows that Poisson model outperforms the Gaussian one especially when both $s$ is large and $\lambda_0$ is small. This could be partly explained by the fact that under this condition, the variance of observations is small relative to the mean. Hence, if the background noise is small, each observation would convey more information about ${\mathbf w}^\star$, making performance of the true model significantly better than the conventional Gaussian one. 

\begin{figure*}
\begin{minipage}[ht]{1\linewidth}
\centering
\centerline{\includegraphics[scale=0.3, angle=0]{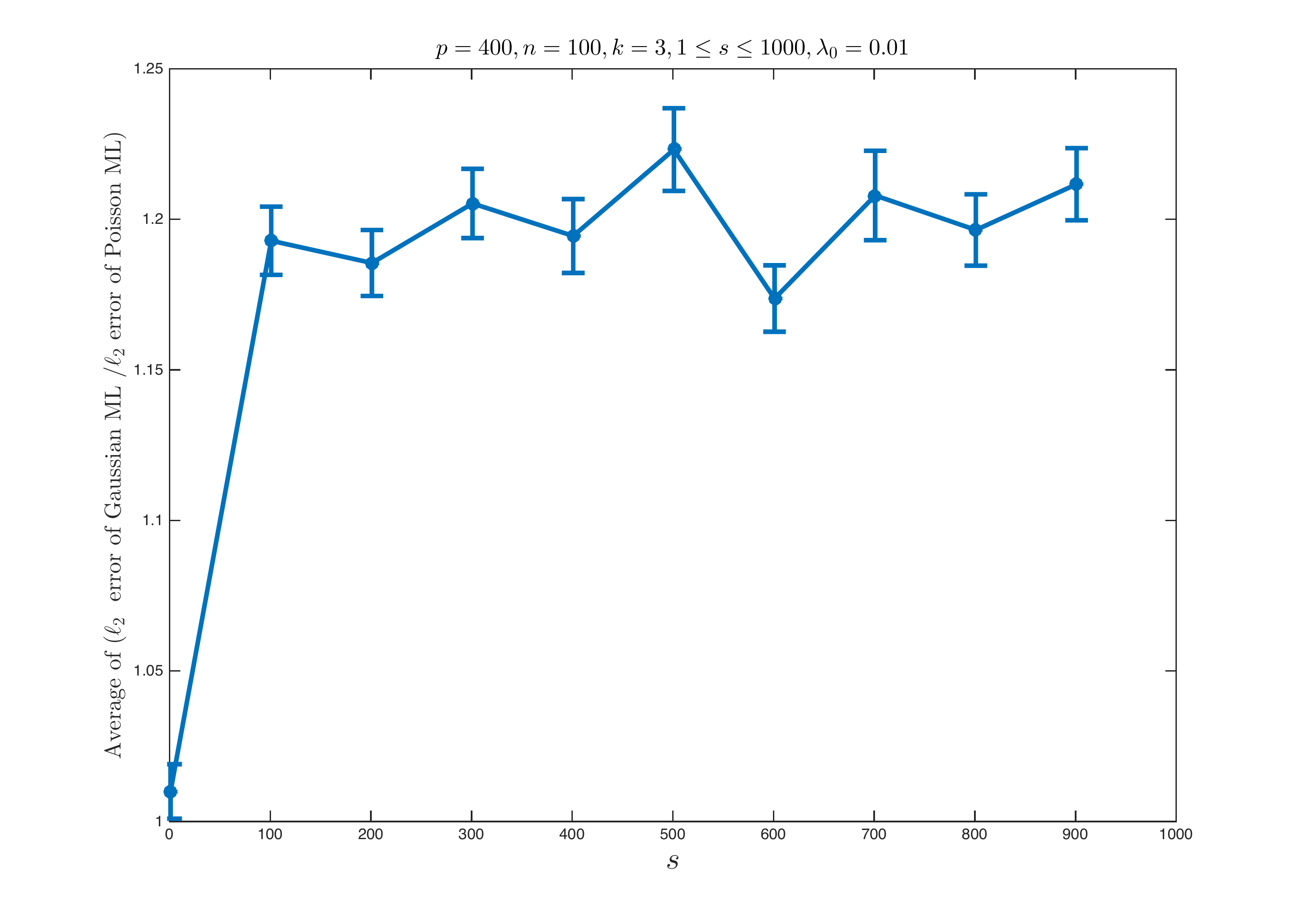}
\includegraphics[scale=0.3, angle=0]{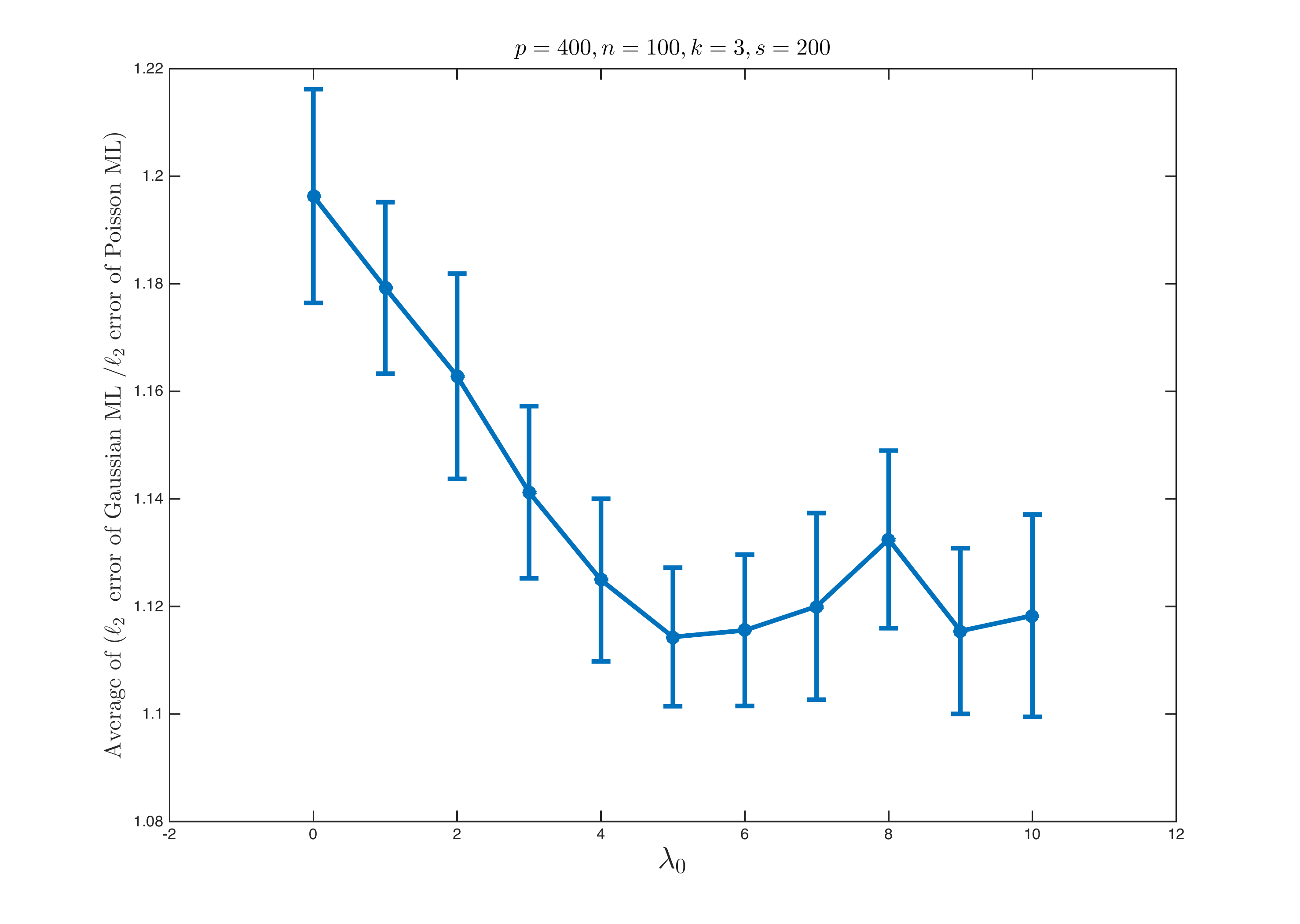}}
\caption{$\ell_2$ errors of Maximum Likelihood estimation assuming a Gaussian observation model vs. assuming the true Poisson distribution. The ratio of errors in two cases across different signal amplitudes (left) shows that the true Poisson model performs constantly better than the Gaussian one when signal amplitude is large. The same plot across various $\lambda_0$ values (right) shows that the difference between two models is significant when the background noise is small. These two plots together suggest that under high SNR values there is a significant difference in the performances of the two models.}
\label{Figx}
\end{minipage}\end{figure*}

\subsection{Rescaled LASSO vs. Regularized ML}
Parameter estimation based on LASSO for the Poisson setting has been studied in \cite{cite5}. The idea is to view the problem as an additive noise problem, where noise belongs to an exponential family of distributions. Alternatively, in \cite{cite5} the problem is viewed as an additive Gaussian noise problem with noise variance being equal to its mean to mimic ``Poisson like'' behavior. This results in a rescaled version of LASSO, which is then used to estimate model parameters. This amounts to scaling the loss function associated with each observation by the corresponding mean value (or equivalently the variance). 

This approach motivates us to compare the regularized ML method against rescaled LASSO for poisson distributed data. This will highlight the essence of using regularized ML instead of rescaled LASSO for our setting. In this section, we will demonstrate that our regularized ML outperforms rescaled LASSO in several regimes including low SNR, high dimensions, and moderate to low sparsity levels. We use probability of successful support recovery as a measure of performance. We use this measure because of the scale invariance property of this measure. It avoids the issue of signal strength $s$ in impacting the performance metric and obviates the need to normalize the measure with respect to $s$, as we did in the last section.

To compare the performance of regularized ML and rescaled LASSO, we first generate a random sensing matrix ${\mathbf A} \in \mathbb{R}^{n\times p}$ where each element $a_{i,j}$ is an independent truncated Gaussian random variable. We then generate a random ${\mathbf w}^\star$. To recover ${\mathbf w}^\star$, we draw $n$ Poisson distributed samples with coefficients specified in ${\mathbf A}$ as:
$$y_i \sim \Poisson(\lambda_{0}+{\mathbf a}_i^{\T}{\mathbf w}^\star)$$
We first solve the non linear optimization where ${\mathbf w}$ is constrained to satisfy $\forall i ~:~ {w}_i \geq 0, \sum_{i} {w}_i \leq s$. 
$$\widehat{\mathbf w}_{ML}=\argmin_{\forall i ~:~ {w}_i \geq 0, \sum_{i} {w}_i \leq s}-\frac{1}{n}\sum_{i=1}^ny_i\log(\lambda_{0}+{\mathbf a}_i^{\T}{\mathbf w})-{\mathbf a}_i^{\T}{\mathbf w}$$
For the purpose of comparison, we also compute the rescaled LASSO estimator:
$$\widehat{\mathbf w}_{LS}=\argmin_{\forall i ~:~ {w}_i \geq 0, \sum_{i} {w}_i \leq s}\frac{1}{n}\sum_{i=1}^n\frac{(y_i-\lambda_{0}-{\mathbf a}_i^{\T}{\mathbf w})^2}{\lambda_{0}+{\mathbf a}_i^{\T}{\mathbf w}}$$

We then threshold the solution by zeroing out components of $\widehat{\mathbf w}_{ML}$ and $\widehat{\mathbf w}_{LS}$ below a pre-defined small threshold $t$. 
We average the estimation performance over 100 Monte Carlo runs. The performance of the two methods are compared in Fig. \ref{tr} and Fig. \ref{k}. The results are compared for different number of observations $n$, and different sparsity levels $k$, respectively.

In Fig. \ref{tr}, we compare the performance of regularized ML estimation with that of rescaled LASSO as a function of $n$. We fix $\lambda_0=100$, $p=400$, $t=10^{-4}$, $s = 1$ and $k=40$. At each iteration, we estimate ${\mathbf w}^\star$ based on $n$ observations where $n$ varies from $2$ to $400$. We compare the performance of the two approaches based on probability of successful recovery of the support set. This error is 0 if the thresholded support set of the estimation is equal to that of the ground truth and 1 otherwise. We average this measure over 100 samples of ${\mathbf w}^\star$ for a fixed ${\mathbf A}$.
\begin{figure}
\begin{minipage}[b]{1\linewidth}
  \centering
  \centerline{\includegraphics[width= 9cm]{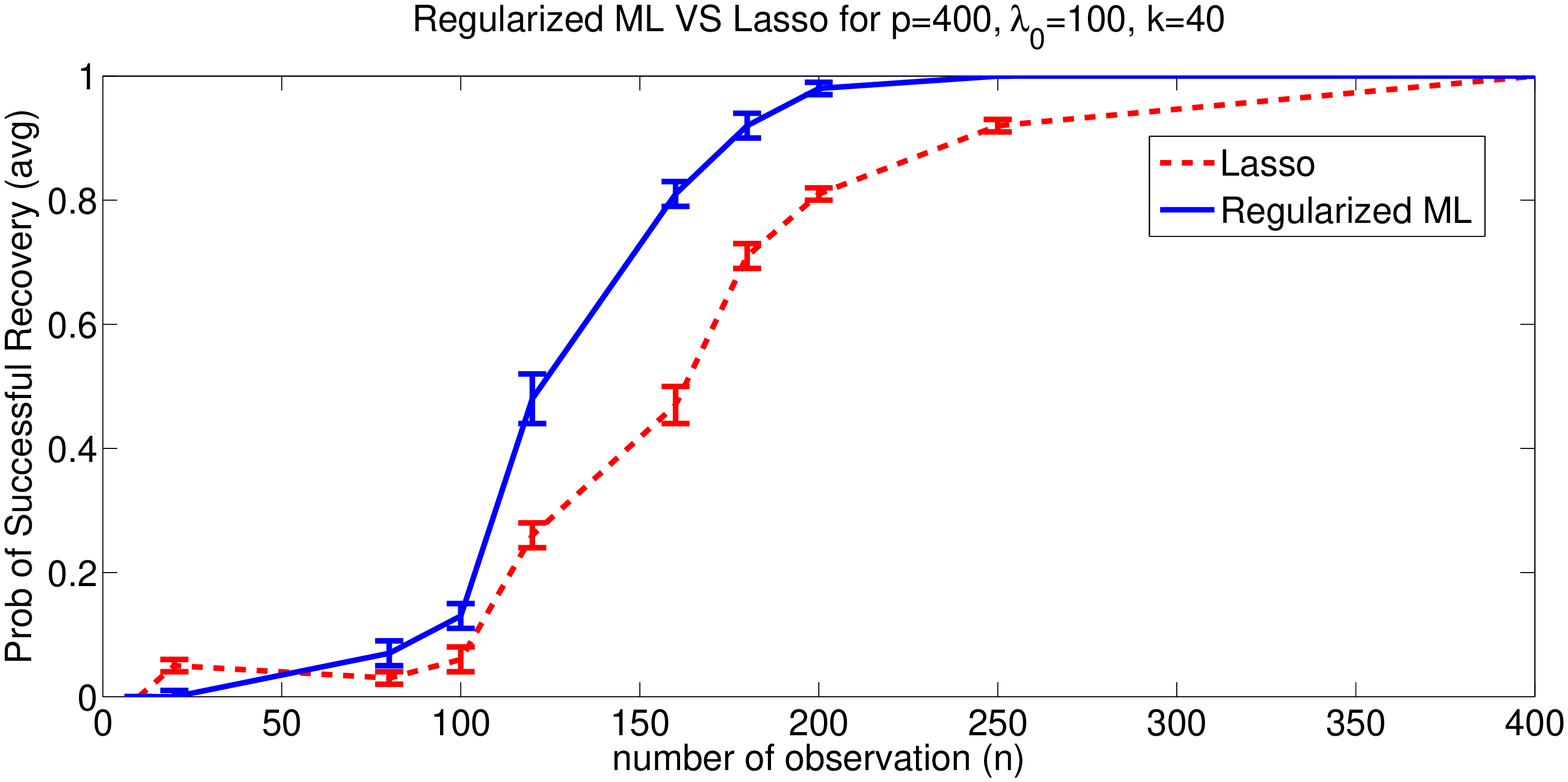}}
\caption{Probability of successful support recovery as a function of $n$ for $p=400$, $\lambda_0=100$, $k=40$, $t=10^{-4}$, and $m=100$ Monte Carlo loops. This figure illustrates twice faster convergence of probability of success with respect to number of observations for Regularized ML in comparison to Rescaled LASSO.
}
\label{tr}
\end{minipage}
\end{figure}

In Fig. \ref{k}, we compare the performance of regularized ML estimation with that of rescaled LASSO for different sparsity levels, $k$. This time, we fix $\lambda_0=100$, $p=200$, $s = 1$, and $n=100$. For each $k$, we generate 100 samples of $k$-sparse ${\mathbf w}^\star$'s and recover them from $n$ observations. Since $\| {\mathbf w}^\star \|_1=1$ for all values of $k$, we threshold each element of $\widehat{\mathbf w}$ by $t = \frac{0.01}{k}$, to obtain their sparse support set. We measure the performance of the two estimators based on average probability of successful recovery of the thresholded support set for each value of $k$. 
Notice that the error bars in Fig. \ref{tr} and Fig. \ref{k} indicate that the difference between the methods is indeed statistically significant.
\begin{figure}
\begin{minipage}[b]{1\linewidth}
  \centering
  \centerline{\includegraphics[width= 9cm]{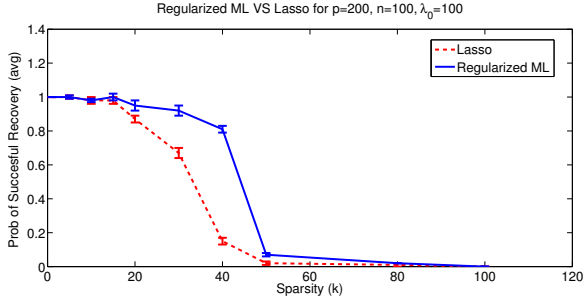}}
\caption{Probability of successful support recovery as a function of $k$ for $p=200$, $\lambda_0=100$, $n=100$, and $m=100$ Monte Carlo loops. As this figure suggests, probability of successfull recovery drops faster when the sparsity level increases for Rescaled LASSO in comparison to Regularized ML. This shows robustness of ML approach to model parameters.
}
\label{k}
\end{minipage}
\end{figure}

In Fig. \ref{ROC}, we compare the result of regularized ML estimation with that of rescaled LASSO in terms of the ROC curves. In our ROC curve, we plot the average number of true detections against the average number of false alarms. True detections are indices that are common in the thresholded estimated support set and that of the Ground Truth, whereas, false alarms are the indices in the thresholded estimated support set that are not included in the support set of the Ground Truth. This time, we fix $\lambda_0=100$, $p=200$, $n=100$, $s = 1$, and $k=20$. We fix a sensing matrix ${\mathbf A}$, and generate 100 random ${\mathbf w}^\star$'s. By applying different thresholds $t=\frac{1}{k}$ to $t=\frac{0.001}{k}$ we obtain the different points in the ROC plot. We average Probability of Detection (PD) and Probability of False alarm (PF) over $100$ Monte Carlo runs.
\begin{figure}
\begin{minipage}[b]{1\linewidth}
  \centering
  \centerline{\includegraphics[width= 9cm]{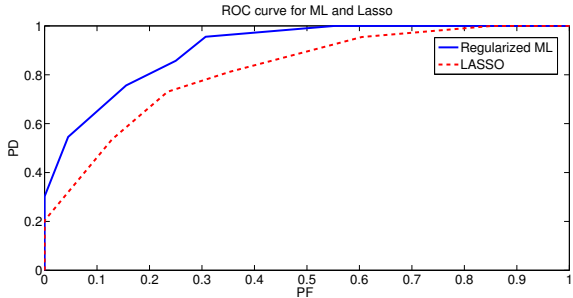}}
\caption{ROC curve for Regularized ML and Rescaled LASSO for  $n=100$, $p=200$, $k=20$, $\lambda=100$, and $m=100$ Monte Carlo loops. This figure illustrates the superiority of Regularized ML in comparison to Rescaled LASSO for parameter estimation under Poisson models.
}
\label{ROC}
\end{minipage}
\end{figure}



\subsection{Explosive Identification}

In this experiment, we first measure the light intensities of different fluorophores before and after separate exposures to a unit weight of different explosives. The intensities are measured by counting the number of photons received at each photo-sensor. Each explosive $j$ has a unique quenching effect in the fluorescence property of each fluorophore $i$, which we denote by $a_{i,j}$. In the experimental setting, $\lambda_i$ is the before exposure intensity for fluorophore $i$ and is estimated by averaging the before exposure photon counts from multiple experiments. Therefore, $\lambda_i$'s can be assumed to be known. We model the after exposure intensity of $j$-th explosive in sensor $i$ as :
$y_i \sim \Poisson(\lambda_i(1-a_{i,j}))$

In the next step, fluorophores are exposed to an unknown mixture of these explosives. The goal is to recover which and how much of each explosive is contained in that mixture.

The physics of the problem suggests that when the fluorophore is exposed to a mixture of explosives, the quenching effects are additive in the regime where the mixture weights are small \cite{cite6}. Therefore, our observations are best modeled by a Poisson distribution with additive rate model for each fluorophore:
$$ y_i \sim \Poisson\left(\lambda_i(1- {\mathbf a}_i^\top {\mathbf w}^\star)\right)$$ 
where $w^\star_j$ is the amount of the explosive $j$ in the mixture. We solve this problem through Regularized ML and Rescaled LASSO and compare the results.

In this problem, matrix ${\mathbf A}$, the responses of $n=8$ fluorophores to $p=12$ basic explosives is given. Based on this given data and our additive model for mixtures, we generated 10 mixtures by combining up to 3 random explosives. We used Regularized ML and Rescaled LASSO to identify these mixtures through their effect on fluorescence property of our fluorophores. The result is shown in the form of a $10\times 12$ grid in Fig. \ref{FlComp}. In this grid, rows are different mixtures and columns are different explosives. Dark squares indicate the absence (or negligible contribution), whereas lighter squares indicate higher amount of the corresponding explosive in the associated mixture.
\begin{figure}[htb]
\centering
\centerline{\includegraphics[width= 9cm]{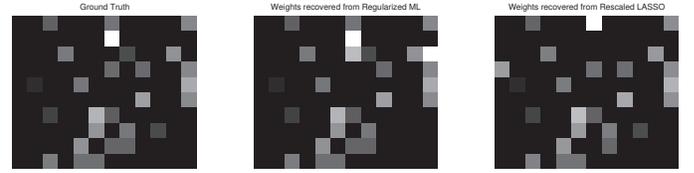}}
\caption{Sparse recovery results for $k\leq 3$. From left to right: Ground Truth, ${\mathbf w}^\star$, ML estimate of ${\mathbf w}^\star$, Rescaled LASSO estimate of ${\mathbf w}^\star$. Columns: Basic explosives.
Rows: Synthesized mixtures with $k$ basic explosives. Non-black squares at each row show the explosives that the corresponding mixture is composed of. Lighter colors show larger amounts.
}
\label{FlComp}
\end{figure}


\subsection{Internet Marketing Application}
This application presents a different scenario where the ultimate objective is prediction and inference. These experiments are useful because they further our understanding of Poisson models in terms of the why/how and in what contexts Poisson models are more meaningful in comparison to Gaussian models for Poisson distributed data. While the goals here are outside the purview of the theory we have developed, it nevertheless serves the purpose of understanding sparse high-dimension Poisson models in the context of not only estimation but also for prediction. 

We discuss the eMarketing application in this section, where we will illustrate Poisson model is more meaningful than the conventional Gaussian noise model. In this application, our goal is to identify the most effective advertisement websites that result in higher website traffic in the clothing market. Our assumption is that the website traffic is generated as a superposition of the traffic generated from current customers and the traffic from advertisement through backward links (links in advertisement websites that are linked to these business websites). In general, big business websites typically buy a total of 1000-1500 backward links from a number of advertisement websites. However, the hypothesis is that only a few of these advertisement websites are efficiently directing costumers. Our goal is to identify those dominant advertisement websites.
 
We model the number of daily visits, $y_i$, by:
$$y_i \sim \Poisson(\lambda_{i,0}+{\mathbf a}_i^{\T}{\mathbf w}^\star)$$
where $\lambda_{0,i}$ models the current customers who visit the site directly and is obtained through online statistics of the website. Specifically, a long run average of traffic which is not referred by any advertisement website gives a reasonable estimate and so we can assume that $\lambda_{0,i}$'s are actually known. This traffic could be logged and acquired through online statistics of the business websites. 

Moreover, $a_{i,j}$ is the number of backwards link for the website $i$ in the advertisement website $j$. Our model assumes that each of the backward links brings independent traffic to the website. Therefore, we used the Poisson distributed random variable described earlier to model the number of visits to a business website.

Our observations are the daily online visits to 50 top clothing brands. From the information provided in alexia.com, we chose the top 150 advertisement websites for these brands along with the number of backward links for each website. Our goal is to recover the weight vector ${\mathbf w}^\star$, where $w^\star_j$ is a measure of dominance for advertisement website $j$ in clothing market. We recover ${\mathbf w}^\star$ via regularized ML and rescaled LASSO (weights smaller than 0.01 are theresholded to 0). The result is provided in Table 1. In this table, we illustrate the corresponding score for each popular website based on their dominance in advertising for clothing brands.

\begin{table}[h]
\caption{Top backwardlist websites for clothing brands using Regularized ML and Rescaled LASSO}
\begin{center}
\begin{tabular}{|l|c|c|c|c|c|c|c|c|c|}
               \hline
               Backward link & ML estimated weight \\
               \hline
               Amazon & 0.32\\
               Twitter & 0.21\\
               Pinterest & 0.17 \\              
               Google &  0.15\\
               Blogger &  0.06\\
               Bing & 0.05\\
               douban & 0.01\\
               tumblr & 0.01 \\              
               \hline
               \hline
               Backward link & Lasso estimated weight \\
               \hline
               Amazon & 0.35\\
               Pinterest & 0.17 \\   
               Twitter & 0.16   \\        
               Google &  0.16\\
               Bing & 0.13\\            
               \hline

\end{tabular}
\end{center}
\end{table}

To compare the result of the two approaches mentioned above, we use the Bayes factor \cite{Kass} and predictive held-out log likelihood comparison mentioned in \cite{Blei}. It should be mentioned that these tests are interpretable only when the number of parameters are comparable in the hypothesis models. In our problem, in fact, the two models have equal number of parameters.

Given a set of observed data $y_1, \hdots, y_n$, and a model selection problem in which we have to choose between two models, Bayesian inference compares the plausibility of the two different models $M_1$ and $M_2$ through a likelihood test:

$$\frac{\Pr\left(y_1, \hdots, y_n|M_1\right)}{\Pr\left(y_1, \hdots, y_n|M_2\right)}\lessgtr 1$$ 

When the parameters of models $M_1$ and $M_2$ are not known a priori, in Bayes factor test, we estimate them from $y_1, \hdots, y_n$ and then use those estimations in computing the likelihood ratio. On the other hand, in predictive held-out log likelihood comparison, we divide the data into two groups. We estimate the model parameters for $M_1$, and $M_2$ using the first group of data, and we compare the likelihoods for the second part of data given $M_1$ and $M_2$ specified by the first group.

Since Poisson is a PMF distribution on integers, to compare the two models using Bayes factor, we need to superimpose the Gaussian distribution on a histogram defined on integer valued $y_i$'s. For a Gaussian distribution characterized by $\mathcal{N}(\mu,\sigma)$, the value of the histogram at each integer valued $y$ is computed as:
\begin{equation}
\label{hist}
\hist(y)=\frac{1}{Q(\frac{\mu}{\sigma})}\times \left(Q\left(\frac{y-\mu}{\sigma}\right)-Q\left(\frac{y+1-\mu}{\sigma}\right)\right)
\end{equation}
It is easy to show that this histogram corresponds to a valid PMF. We denote this PMF by $\overline{\mathcal{N}}(\mu,\sigma)$. 

After this conversion, the Bayes factor as a function of sparsity level, $k$, is calculated as:
{\fontsize{0.85em}{0.1cm} \selectfont
$$BF_{k}=\frac{\Pr\left(y_1,\hdots, y_n|y_i\sim \Poisson(\lambda_{0, i} +{\mathbf a}_i^{\T} \widehat{\mathbf w}^k_{ML})\right)}{\Pr\left(y_1, \hdots, y_n|y_i \sim\overline{\mathcal{N}}(\lambda_{0, i} + {\mathbf a}_i^{\T} \widehat{\mathbf w}_{LS}^k,\lambda_{0, i} + {\mathbf a}_i^{\T} \widehat{\mathbf w}^k_{LS})\right)}$$ }
where $\widehat{\mathbf w}^k_{ML}$ and $\widehat{\mathbf w}_{LS}^k$ are $k$ sparse thresholded approximations of $\widehat{\mathbf w}_{ML}$ and  $\widehat{\mathbf w}_{LS}$, respectively. The Bayes factor log curve as a function of sparsity is presented in Fig. \ref{cox}.
\begin{figure}
\begin{minipage}[b]{1\linewidth}
  \centering
  \centerline{\includegraphics[width= 9cm]{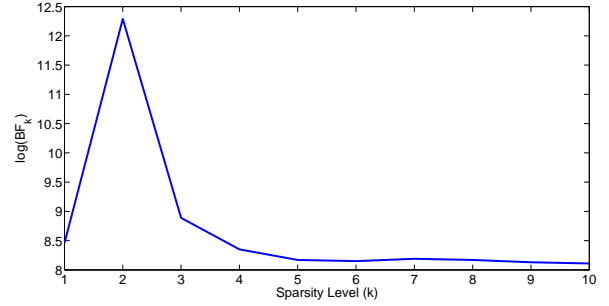}}
\caption{The Bayes Factor Ratio for regularized ML and rescaled LASSO. As we can see, higher Bayes Factor suggests that Poisson setting introduced earlier is a better model for this problem compared to the traditional mean squared loss.
}
\label{cox}
\end{minipage}
\end{figure}
To compute the predictive held-out log likelihood for each method, we first use 80\% of the data (40 training data) to calculate an estimation of the parameters, $\widetilde{\mathbf w}_{ML}$ and $\widetilde{\mathbf w}_{LS}$. We use $\widetilde{\mathbf w}_{ML}$ and $\widetilde{\mathbf w}_{LS}$ for each model to compute the log likelihood function for the remaining 20\% of data (10 test data):
$$\mathcal{L}_{ML}=\sum_{i=1}^{10}-\lambda_{0, i} -{\mathbf a}_i^{\T}\widetilde{\mathbf w}_{ML}+y_i\log(\lambda_{0, i}+{\mathbf a}_i^{\T}\widetilde{\mathbf w}_{ML})-\log(y_i!)$$
{\fontsize{0.85em}{0.1cm} \selectfont
\begin{align*}&\mathcal{L}_{LASSO}=\sum_{i=1}^{10}-\log\left(\!Q(\sqrt{\lambda_{0, i}+{\mathbf a}_i^{\T}\widetilde{\mathbf w}_{LS}})\right)+\\&\log\!\left(\!Q\left(\frac{y_i-\lambda_{0, i}-{\mathbf a}_i^{\T}\widetilde{\mathbf w}_{LS}}{\sqrt{\lambda_{0, i}+{\mathbf a}_i^{\T}\widetilde{\mathbf w}_{LS}}}\!\right)\!-\!Q\left(\frac{y_i+1-\lambda_{i, 0} -{\mathbf a}_i^{\T}\widetilde{\mathbf w}_{LS}}{\sqrt{\lambda_{0, i}+{\mathbf a}_i^{\T}\widetilde{\mathbf w}_{LS}}}\right)\right)
\end{align*}
}
where $Q$ stands for the tail probability of standard normal distribution here. Intuitively, the model that is closer to the ground truth results in higher log likelihood value. The log likelihood values for the two approaches are shown in Fig. \ref{cox1}. The large gap between the predictive log likelihood of the two models implies that Poisson is a better underlying model for this application.
\begin{figure}
\begin{minipage}[b]{1\linewidth}
  \centering
  \centerline{\includegraphics[width= 9cm]{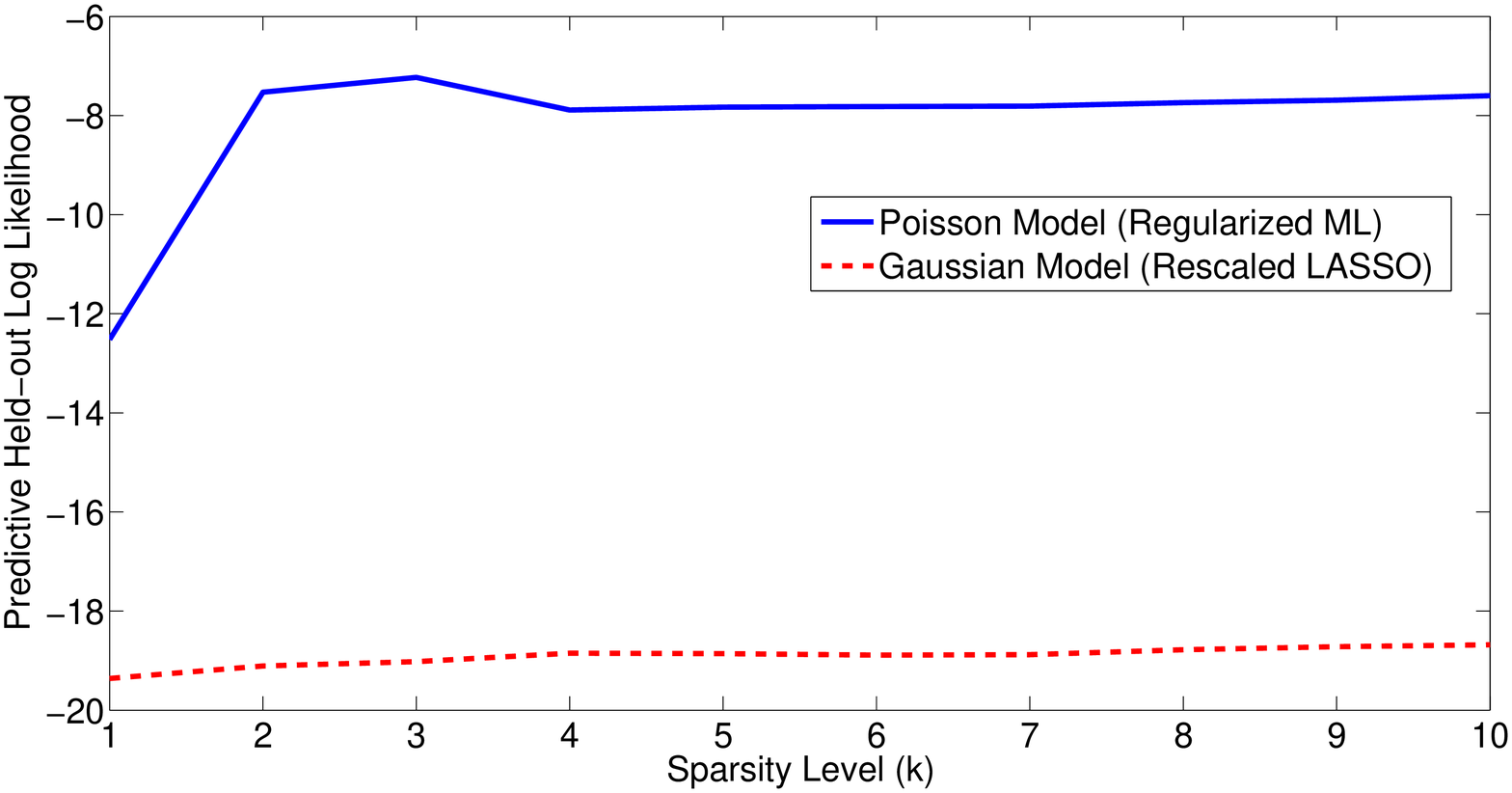}}
\caption{The Predictive Held-out Log Likelihood Comparison. The huge gap between the Predictive Held out Likelihood of Regularized ML and Rescaled LASSO implies that Poisson is the right model for this problem therefore, Regularized ML approach should be taken.
}
\label{cox1}
\end{minipage}
\end{figure}
\subsection{Dynamics of Online Marketing}
In the previous section, our results show that ML estimator and Poisson model outperforms LASSO approach for the problem of online marketing. Therefore, in this section, we apply ML method to estimate the weights, ${\mathbf w}^\star$, for the advertisement websites over time.
\begin{figure}
\begin{minipage}[b]{1\linewidth}
  \centering
  \centerline{\includegraphics[width = 9cm]{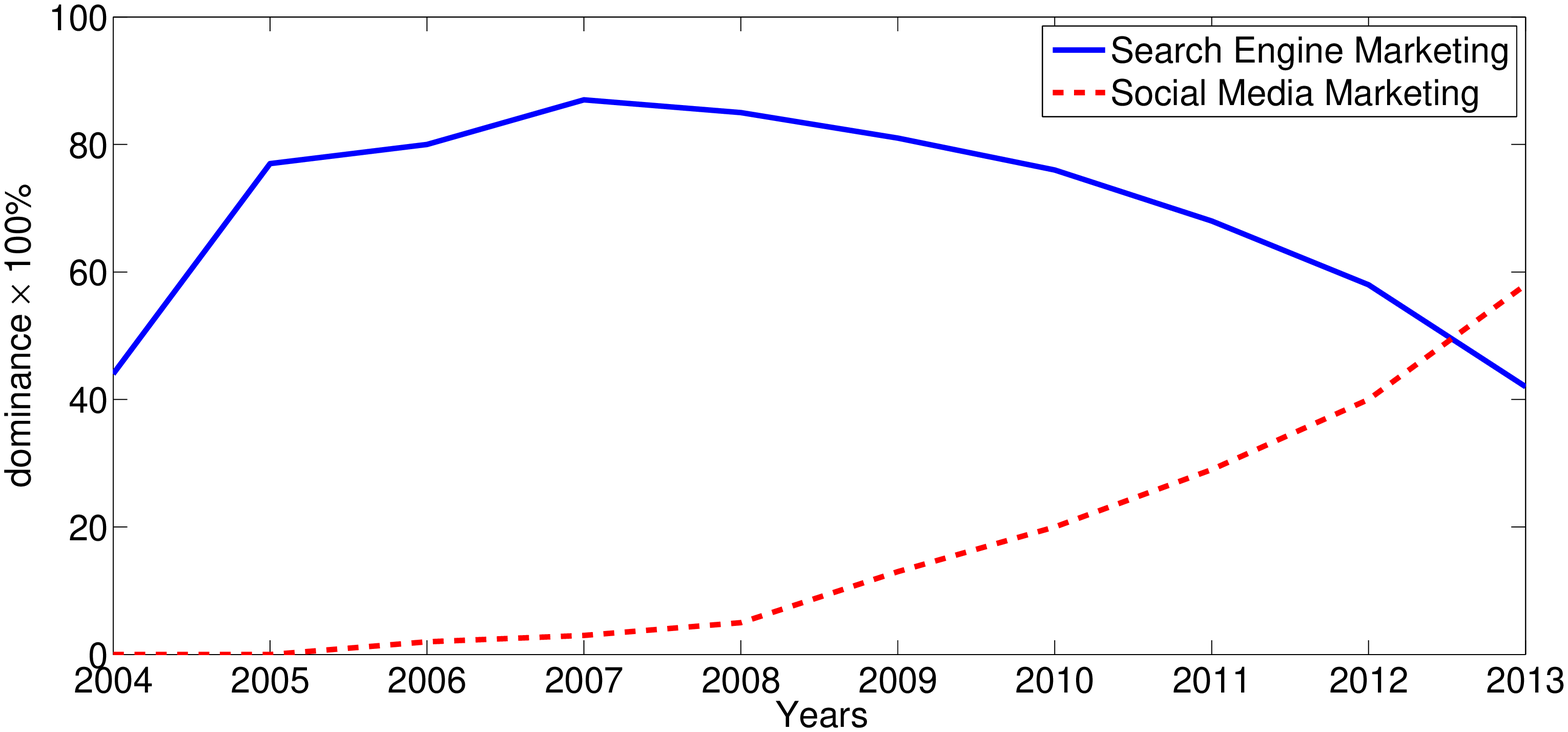}}
\caption{Dynamics of SEM and SMM over time for clothing market. This figure gives a quantitative comparison of the two most controversial methods of online marketing.
}
\label{cox2}
\end{minipage}
\end{figure}
\begin{table}[h]
\small{\caption{Top advertisement websites for clothing market in 2013 and 2008}
\begin{center}
\scalebox{0.7}{
\begin{tabular}{|l|c|c|c|c|c|c|c|c|c|}
               \hline
               Backward link & Estimated weights in 2013 &  Estimated weights in 2008\\
               \hline
               Twitter.com & 0.18 & 0.02\\
               Facebook.com & 0.18 & 0.02\\
               Pinterest.com & 0.14 & 0.00\\
               Amazon.com & 0.28 & 0.22\\              
               Google.com &  0.15 & 0.52\\
               Bing.com & 0.05 & 0.00\\
               Yahoo.com & 0.00 & 0.11\\
        
               \hline
\end{tabular}}
\end{center}}
\end{table}

A brief look at Table II shows how $w^\star_j$'s have changed dramatically over time. To study this change closely, we estimated $w^\star_j$'s for different advertisement websites from 2004 to 2013. We group the Social networks, such as facebook.com, twitter.com, pinterest.com, etc, together to study the effect of Social Media Marketing (SMM). We also group search engines, such as google.com, yahoo.com, bing.com, etc, together to represent Search Engine Marketing (SEM). We add the scores of the corresponding websites in each group. \hbox{Fig. \ref{cox2}} demonstrates the dynamics of SMM and SEM, the most controversial forms of online marketing, over time \cite{cite13}. Although SEM has been thought to be the most powerful media marketing tool, recent empirical studies show the growing influence of SMM during the last couple of years \cite{MF}.  The gigantic size of social media coupled with the relatively low cost per impression and the so called word of mouth have made SMM a powerful marketing tool. Our results confirm the significant influence of SMM relative to SEM since 2012.

\section{Conclusions}
In this paper we proposed a high dimensional sparse estimation problem where the observations are governed by heterogeneous Poisson rates motivated by applications such as Online Marketing and Explosives detection. 
Unlike least-squares linear regression setting, we showed that the scale of the parameter has a significant effect on sample complexity. In particular we derived upper bounds by analyzing the performance of an $\ell_1$ constrained decoder and develop matching lower bounds for worst-case mean-squared error. We presented a number of synthetic and real-world experimental results to not only demonstrate the importance of scale in Poisson problems but also the superior performance of $\ell_1$ constrained ML over rescaled LASSO for a number of problems and metrics that are of practical interest. 

\section{Appendix A} 
\label{AP}

\subsection{Definitions}
\begin{definition}
{\bf Strong Convexity} : A differentiable function $Q(.) : \mathbb{R}^p \rightarrow \mathbb{R}$ is called Strong Convex around a point ${\mathbf w}^\star$ over a set of perturbations $\mathbb{C}$ when the following holds for constants $\kappa$, and $\tau$:
\begin{multline*}
\forall {\bm \Delta} \in \mathbb{C} : Q({\mathbf w}^\star + {\bm \Delta}) - Q({\mathbf w}^\star) \\ - \langle \nabla Q({\mathbf w}^\star), {\bm \Delta} \rangle  \geq \kappa \| {\bm \Delta} \|_2^2 - \tau \| {\bm \Delta} \|_1.
\end{multline*}
\end{definition}

\subsection{Proof of Theorem 1}
The proof technique is similar to the one in \cite{cite21}, with the exception that the allowed perturbation set $\mathbb{C}_s$ now depends on $s$.
Let ${\mathcal F} := Q({\mathbf w}^\star + {\bm \Delta}) - Q({\mathbf w}^\star)$, and 
\begin{equation} \label{ConeDef}
\mathbb{C}_s({\mathbf w}^\star) := \{ {\mathbf w} - {\mathbf w}^\star : \forall i ~ {w}_i \geq 0, \| {\mathbf w} \|_1 \leq s \}.
\end{equation}
For the sake of brevity, we will use $\mathbb{C}_s$ to refer to this set in the rest of the paper. We first show that if $\mathcal{F} > 0$ for all ${\bm \Delta} \in \mathbb{K}_{\delta} := \mathbb{C}_s \cap \{ {\bm \Delta} : \| {\bm \Delta} \|_2 = \delta \} $, then $\| \widehat{\mathbf w} - {\mathbf w}^{\star} \|_2 \leq \delta$.
First note that for any ${\bm \Delta}_0 = ({\mathbf w}_0 - {\mathbf w}^\star) \in \mathbb{C}_s$, and $0 \leq a \leq 1$, $a {\bm \Delta}_0$ also belongs to ${\mathbb{C}}_s$:
$$ a {\bm \Delta}_0 = a ({\mathbf w}_0 - {\mathbf w}^\star) =  a {\mathbf w}_0 + (1-a) {\mathbf w}^\star - {\mathbf w}^\star.$$
But we have 
$\| a {\mathbf w}_0 + (1-a) {\mathbf w}^\star \|_1 \leq a \| {\mathbf w}_0 \|_1 + (1-a) \| {\mathbf w}^\star \|_1 \leq s  $. Moreover all elements of $a {\mathbf w}_0 + (1-a) {\mathbf w}^\star $ are positive. Hence $a {\bm \Delta}_0$ belongs to ${\mathbb{C}}_s$.

Next suppose ${\bm \Delta}_0 := \widehat{\mathbf w} - {\mathbf w}^\star$. Note that $ {\bm \Delta}_0 \in \mathbb{C}_s $. We will assume $\| {\bm \Delta}_0 \|_2 > \delta$ and show that for at least one element of $\mathbb{C}_s \cap \mathbb{K}_{\delta}$, $\mathcal{F} \leq 0$. Let ${\bm \Delta}_1 = \frac{\delta}{\| {\bm \Delta}_0 \|} {\bm \Delta}_0$. Based on the earlier argument, it is clear that ${\bm \Delta}_1 \in \mathbb{C}_s$. Furthermore, it belongs to $\mathbb{K}_{\delta}$.
Next, as $\mathcal{F}({\bm \Delta})$ is convex, Jensen's inequality could be used to show that $\mathcal{F}({\bm \Delta}_1) \leq 0$:
\begin{multline*}
 \mathcal{F}\left({\bm 0} + \frac{\delta}{\| {\bm \Delta}_0 \|_2} {\bm \Delta}_0 \right) \leq \left( 1 - \frac{\delta}{\| {\bm \Delta}_0 \|_2} \right) \mathcal{F}({\bm 0}) \\ + \frac{\delta}{\| {\bm \Delta}_0 \|_2} \mathcal{F}({\bm \Delta}_0) \leq 0.
 \end{multline*}
The last part follows by noting that $\mathcal{F}({\bm 0}) = 0$ and $\mathcal{F}({\bm \Delta}_0) \leq 0$ as $\widehat{\mathbf w}$ is the minimizer of $Q$.

Hence to bound the $\ell_2$ error by $\delta$, we need to show that $\mathcal{F}$ is positive all across $\mathbb{C}_s \cap \mathbb{K}_{\delta}$. But, using Lemma \ref{SCCond}, $\mathcal{F}$ could be lower bounded as follows:
$$ \mathcal{F} \geq \kappa \| {\bm \Delta} \|_2^2 - \tau \| {\bm \Delta} \|_1 + \langle \nabla Q({\mathbf w}^\star), {\bm \Delta} \rangle $$
where $\kappa$ and $\tau$ are defined in Lemma \ref{SCCond}. We also have
$$ |\langle \nabla Q({\mathbf w}^\star), {\bm \Delta} \rangle| \leq \| \nabla Q({\mathbf w}^\star) \|_{\infty} \| {\bm \Delta} \|_1 \leq \nu_n \| {\bm \Delta} \|_1,$$
where the last step has been derived using Lemma \ref{GradUp} and $\nu_n$ is defined therein. 
Putting all these together:
\begin{equation} \label{Last}
\mathcal{F} \geq \kappa \| {\bm \Delta} \|_2^2 - 2 (\tau + \nu_n)\sqrt{k} \| {\bm \Delta} \|_2,
\end{equation}
where in the last step we have used the fact that:
\begin{multline*}
 \| {\bm \Delta} \|_1 = \| {\bm \Delta}_S \|_1 + \| {\bm \Delta}_{S^c} \|_1 \stackrel{(i)}{\leq} 2 \| {\bm \Delta}_S \|_1 \\\stackrel{(ii)}{\leq} 2 \sqrt{k} \| {\bm \Delta}_S \|_2 \leq 2 \sqrt{k} \| {\bm \Delta} \|_2 ,
 \end{multline*}
where $S = \supp({\mathbf w}^\star)$.
$(i)$ follows because for any member of $\mathbb{C}_s$, $\| {\bm \Delta}_S \|_1 \geq \| {\bm \Delta}_{S^c} \|_1$ (check Lemma \ref{ConeProp}), and $(ii)$ holds because for any vector ${\mathbf x}$, $\| {\mathbf x} \|_1 \leq \sqrt{dim({\mathbf x})} \| {\mathbf x} \|_2$. 
Now using Eq. \eqref{Last}, it is easy to see if $\| {\bm \Delta} \| \geq \delta := 3 (\tau + \nu_n) \sqrt{k}/\kappa$, $\mathcal{F} > 0$. This completes the proof. $\square$

\begin{lemma} \label{ConeProp}
For any ${\bm \Delta} \in \mathbb{C}_s$ defined in Eq. \eqref{ConeDef}, $\| {\bm \Delta}_S \|_1 \geq {\bm \Delta}_S^c \|_1$, where $S = \supp({\mathbf w}^\star)$.
\end{lemma}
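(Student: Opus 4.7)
The plan is to exploit three elementary properties baked into the definition of $\mathbb{C}_s$: non-negativity of ${\mathbf w}$, the fact that $\|{\mathbf w}^\star\|_1 = s$ with $\supp({\mathbf w}^\star) = S$, and the $\ell_1$ budget $\|{\mathbf w}\|_1 \leq s$. The cone condition should fall out directly from these feasibility constraints, without any appeal to the optimality of $\widehat{\mathbf w}$. This is a notable structural point worth flagging in the write-up: in standard LASSO analyses one has to invoke $\|\widehat{\mathbf w}\|_1 \leq \|{\mathbf w}^\star\|_1$ to produce the cone, but here the cone is built into the constraint set, so \emph{every} feasible perturbation lies in it.

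The key steps I would carry out in order are as follows. First, I would observe that since ${\mathbf w}^\star$ vanishes off $S$ and ${\mathbf w}$ is entrywise non-negative, the off-support part of ${\bm \Delta} = {\mathbf w} - {\mathbf w}^\star$ satisfies ${\bm \Delta}_{S^c} = {\mathbf w}_{S^c} \geq 0$ componentwise, so $\|{\bm \Delta}_{S^c}\|_1 = \sum_{i \in S^c} w_i$. Second, I would use the budget constraint to rewrite this as $\|{\mathbf w}\|_1 - \sum_{i \in S} w_i \leq s - \sum_{i \in S} w_i$. Third, I would substitute $s = \sum_{i \in S} w^\star_i$, valid because $\supp({\mathbf w}^\star) = S$, to obtain $s - \sum_{i \in S} w_i = \sum_{i \in S}(w^\star_i - w_i)$. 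Finally, the termwise inequality $x \leq |x|$ gives $\sum_{i \in S}(w^\star_i - w_i) \leq \sum_{i \in S}|w^\star_i - w_i| = \|{\bm \Delta}_S\|_1$, and chaining the four bounds delivers $\|{\bm \Delta}_{S^c}\|_1 \leq \|{\bm \Delta}_S\|_1$.

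There is no substantive obstacle; the lemma is geometric bookkeeping, with each of the four steps immediate. The only thing to be careful about is to keep the direction of the absolute value in the last step straight (one uses $w^\star_i - w_i \leq |w^\star_i - w_i|$, not the reverse), and to note that the inequality $\|{\mathbf w}\|_1 \leq s$ need not be tight — the argument works for any feasible ${\mathbf w}$, which is exactly what the proof of Theorem~1 requires when it invokes the chain $\|{\bm \Delta}\|_1 \leq 2\|{\bm \Delta}_S\|_1 \leq 2\sqrt{k}\,\|{\bm \Delta}\|_2$ leading to Eq.~\eqref{Last}.
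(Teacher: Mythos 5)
Your proof is correct and is essentially the paper's own argument: both rest on the identity $\|{\bm \Delta}_{S^c}\|_1 = \|{\mathbf w}_{S^c}\|_1 = \|{\mathbf w}\|_1 - \|{\mathbf w}_S\|_1 \leq s - \|{\mathbf w}_S\|_1 = \|{\mathbf w}^\star\|_1 - \|{\mathbf w}_S\|_1 \leq \|{\mathbf w}_S - {\mathbf w}^\star\|_1 = \|{\bm \Delta}_S\|_1$, with your termwise bound $w^\star_i - w_i \leq |w^\star_i - w_i|$ being just the coordinatewise form of the reverse triangle inequality the paper invokes. No gap; your added remark that the cone property holds for every feasible point (not only the minimizer) is consistent with how Theorem~1 uses the lemma.
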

\begin{proof}
From the definition of $\mathbb{C}_s$ in Eq. \eqref{ConeDef}, we know:
$$\|{\bm \Delta}_S\|_1=\|{\mathbf w}_S-{\mathbf w}^\star\|_1\geq \|{\mathbf w}^\star\|_1-\| {\mathbf  w}_S\|_1$$
Moreover, from $\|{\mathbf w}\|_1 \leq s$ for all ${\bm \Delta} \in \mathbb{C}_s$, we have:
$$\|{\mathbf w}_S\|_1+\| {\mathbf w}_{S^c}\|_1=\|{\mathbf w}\|_1\leq s=\|{\mathbf w}^\star\|_1$$
Therefore,
$$\|{\bm \Delta}_{S^c}\|_1=\|{\mathbf w}_{S^c}\|_1\leq \|{\mathbf w}^\star\|_1-\|{\mathbf w}_S\|_1 \leq \|{\bm \Delta}_S\|_1$$
\end{proof}

\begin{lemma} \label{SCCond}
Let $Q$ be the negative log likelihood of the Poisson model introduced in Sec. \ref{PS}, and $\max_{i,j} a_{i,j} \leq a_{\max}$. If ${\mathbf A}$ satisfies $RE(k, \gamma_k)$,
$Q$ will be Strong Convex around ${\mathbf w}^\star$ over the perturbation set $\mathbb{C}_{s}({\mathbf w}^\star)$ (defined in Eq. \ref{ConeDef}) for 
$$\kappa := \frac{\gamma_k}{9 \lambda_{\max} },$$ $$\tau := a^2_{\max} (4 + 2 \log(\lambda_{\max} / \lambda_{\min})) \sqrt{\frac{\log(2/\zeta)}{n \overline{\lambda}_h}}. $$ with probability of at least $1 - \zeta$, where $\zeta \geq 2 \exp\left(\frac{-n \lambda_{\min} \min(1, \lambda_{\min})}{4 \overline{\lambda}_h}\right)$.
\end{lemma}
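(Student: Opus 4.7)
The strategy is to lower bound the Bregman-type remainder $R(\bm{\Delta}) := Q(\mathbf{w}^\star + \bm{\Delta}) - Q(\mathbf{w}^\star) - \langle\nabla Q(\mathbf{w}^\star), \bm{\Delta}\rangle$ uniformly over $\bm{\Delta}\in\mathbb{C}_s$ in the form $\kappa\|\bm{\Delta}\|_2^2-\tau\|\bm{\Delta}\|_1$. A direct calculation using the affine form $\lambda_i(\mathbf{w})=\lambda_{0,i}+\mathbf{a}_i^\top\mathbf{w}$ causes the linear terms to cancel and gives
\[
R(\bm{\Delta}) \;=\; \frac{1}{n}\sum_{i=1}^n y_i\,\phi(u_i),\quad u_i := \frac{\mathbf{a}_i^\top\bm{\Delta}}{\lambda_i(\mathbf{w}^\star)},\quad \phi(u):=u-\log(1+u),
\]
each summand being nonnegative because feasibility of $\mathbf{w}^\star+\bm{\Delta}$ forces $u_i>-1$. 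I would then apply the elementary inequality $\phi(u)\geq u^2/[2(1+\max(u,0))]$ (which follows from $\phi''(v)=(1+v)^{-2}$). Combining this with $\lambda_i(\mathbf{w}^\star),\,\lambda_i(\mathbf{w}^\star+\bm{\Delta})\leq \lambda_{\max}$ produces the clean quadratic surrogate
\[
R(\bm{\Delta})\;\geq\;\frac{1}{2\lambda_{\max}}\cdot\frac{1}{n}\sum_i \frac{y_i}{\lambda_i(\mathbf{w}^\star)}(\mathbf{a}_i^\top\bm{\Delta})^2.
\]

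Writing $y_i = \lambda_i(\mathbf{w}^\star) + (y_i-\lambda_i(\mathbf{w}^\star))$ splits the inner sum into a deterministic piece equal to $\|\mathbf{A}\bm{\Delta}\|_2^2/n$ and a centered stochastic piece. By Lemma~1, every $\bm{\Delta}\in\mathbb{C}_s$ lies in the cone required by the RE condition with $S=\supp(\mathbf{w}^\star)$, $|S|=k$; hence RE$(k,\gamma_k)$ delivers $\|\mathbf{A}\bm{\Delta}\|_2^2/n\geq \gamma_k\|\bm{\Delta}\|_2^2$, which after combining with the $1/(2\lambda_{\max})$ prefactor produces the stated $\kappa=\gamma_k/(9\lambda_{\max})$ quadratic term, the constant~$9$ accommodating the slack required to absorb the stochastic residual in the next step.

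The crux of the argument is to show that the centered stochastic term $Z(\bm{\Delta}) := n^{-1}\sum_i (y_i-\lambda_i)(\mathbf{a}_i^\top\bm{\Delta})^2/\lambda_i$ is controlled by $\tau\|\bm{\Delta}\|_1$ uniformly in $\bm{\Delta}\in\mathbb{C}_s$. For a fixed $\bm{\Delta}$, Bernstein's inequality for sums of centered Poisson variables applies with variance proxy $v(\bm{\Delta})=n^{-2}\sum_i (\mathbf{a}_i^\top\bm{\Delta})^4/\lambda_i$ and envelope $\max_i(\mathbf{a}_i^\top\bm{\Delta})^2/(n\lambda_i)$. The lemma's lower bound on $\zeta$ is precisely the threshold that keeps the deviation in its sub-Gaussian regime, where it scales as $\sqrt{2v(\bm{\Delta})\log(2/\zeta)}$. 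Linearizing through $(\mathbf{a}_i^\top\bm{\Delta})^2\leq a_{\max}\|\bm{\Delta}\|_1|\mathbf{a}_i^\top\bm{\Delta}|$ and invoking $\sum_i 1/\lambda_i = n/\overline{\lambda}_h$ turns this into a bound of order $a_{\max}^2\|\bm{\Delta}\|_1\sqrt{\log(2/\zeta)/(n\overline{\lambda}_h)}$ at a single scale, with the harmonic mean $\overline{\lambda}_h$ (rather than $\lambda_{\min}$) emerging precisely because the weights $1/\lambda_i$ enter as reciprocals. A peeling argument over dyadic ranges of $\lambda_i$, coupled with a cover of the cone exploiting the inequality $\|\bm{\Delta}\|_1\leq 2\sqrt{k}\|\bm{\Delta}\|_2$ available on $\mathbb{C}_s$, upgrades this to a uniform statement and contributes the additional $\log(\lambda_{\max}/\lambda_{\min})$ factor appearing in $\tau$.

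The principal obstacle is this last uniform concentration step. The naive quadratic envelope $(\mathbf{a}_i^\top\bm{\Delta})^2\leq a_{\max}^2\|\bm{\Delta}\|_1^2$ would force $\tau$ to grow with the signal amplitude $s$ (since $\|\bm{\Delta}\|_1\leq 2s$ on $\mathbb{C}_s$), which the lemma explicitly forbids. The proof must therefore peel off exactly one factor of $|\mathbf{a}_i^\top\bm{\Delta}|$ and retain the other unabsorbed, so that Bernstein's variance bookkeeping converts the remaining $1/\lambda_i$ weights into $1/\overline{\lambda}_h$, and the peeling over the dynamic range of Poisson rates yields only the logarithmic dispersion factor $\log(\lambda_{\max}/\lambda_{\min})$. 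Once this is achieved, summing the deterministic and stochastic contributions and dividing by $2\lambda_{\max}$ gives the advertised strong convexity inequality with the stated $\kappa$ and $\tau$.
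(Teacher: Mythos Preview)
Your setup is correct through the identity $R(\bm{\Delta})=\frac1n\sum_i y_i\phi(u_i)$ and the elementary bound $\phi(u)\ge u^2/[2(1+u_+)]$, but the route you propose for the stochastic term has a genuine gap, and it diverges from the paper's argument precisely at this point.

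The paper does \emph{not} pass to a quadratic surrogate before splitting. It writes $y_i=\lambda_i(\mathbf{w}^\star)+(y_i-\lambda_i(\mathbf{w}^\star))$ directly in $\sum_i y_i\phi(u_i)$, obtaining a deterministic piece $\delta Q_1=\frac1n\sum_i\lambda_i\phi(u_i)$ (handled by Lemma~\ref{Q1Lemma} and RE to give $\kappa$) and a stochastic piece $\delta Q_2=\frac1n\sum_i(y_i-\lambda_i)\phi(u_i)$. The key step is the H\"older-type factorization
\[
|\delta Q_2|\;\le\;\Bigl(\max_i\,\lambda_i(\mathbf{w}^\star)\,\phi(u_i)\Bigr)\cdot\Bigl(\tfrac1n\sum_i|y_i/\lambda_i(\mathbf{w}^\star)-1|\Bigr).
\]
The second factor is a \emph{single} random variable that does not depend on $\bm{\Delta}$; Bernstein (Corollary~\ref{CorConc}) bounds it by $2\sqrt{\log(2/\zeta)/(n\overline{\lambda}_h)}$ once and for all, so no covering or peeling is ever needed. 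The first factor is handled deterministically: a short case analysis on the sign of $\mathbf{a}_i^\top\bm{\Delta}$, using a first-order Taylor expansion of $\log(1+u_i)$, shows $\lambda_i\phi(u_i)\le a_{\max}(2+\log(\lambda_{\max}/\lambda_{\min}))\|\bm{\Delta}\|_1$. The $\log(\lambda_{\max}/\lambda_{\min})$ factor thus enters from bounding $|\log(\lambda_i(\mathbf{w})/\lambda_i(\mathbf{w}^\star))|$, not from any peeling over rate levels.

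Your route, by contrast, produces the stochastic process $Z(\bm{\Delta})=\frac1n\sum_i(y_i-\lambda_i)(\mathbf{a}_i^\top\bm{\Delta})^2/\lambda_i$, which depends quadratically on $\bm{\Delta}$. For a fixed $\bm{\Delta}$, Bernstein's variance proxy is $v(\bm{\Delta})=n^{-2}\sum_i(\mathbf{a}_i^\top\bm{\Delta})^4/\lambda_i$, and the substitution $(\mathbf{a}_i^\top\bm{\Delta})^2\le a_{\max}\|\bm{\Delta}\|_1\,|\mathbf{a}_i^\top\bm{\Delta}|$ you propose still leaves $(\mathbf{a}_i^\top\bm{\Delta})^2$ inside the sum; carrying it through gives $\sqrt{v\log(2/\zeta)}\le a_{\max}^2\|\bm{\Delta}\|_1^{\,2}\sqrt{\log(2/\zeta)/(n\overline{\lambda}_h)}$, not $\|\bm{\Delta}\|_1$ as you claim. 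Peeling over dyadic levels of $\lambda_i$ cannot repair this, because the extra $\|\bm{\Delta}\|_1$ factor arises from the quadratic dependence of the summands on $\bm{\Delta}$, not from the heterogeneity of the rates. The obstacle you flag in your last paragraph is therefore real, and the mechanism you sketch does not resolve it; the paper sidesteps it entirely by decoupling the randomness from $\bm{\Delta}$ before any quadratic bound is applied.
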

\begin{proof}
Let
$$ \delta Q := Q({\mathbf w}^\star + {\bm \Delta}) - Q({\mathbf w}^\star) - \langle \nabla Q({\mathbf w}^\star), {\bm \Delta} \rangle,$$ 
and ${\bm \Delta} := {\mathbf w} - {\mathbf w}^\star \in \mathbb{C}_s$.
Using some algebraic manipulations, $\delta Q$ could be written as follows:
$$ \delta Q = \frac{1}{n} \sum_{i = 1}^n -y_i \log \left( 1 + {\mathbf a}_i^{\top} {\bm \Delta} / \lambda_i({\mathbf w}^\star) \right) + y_i {\mathbf a}_i^{\top} {\bm \Delta} / \lambda_i({\mathbf w}^\star). $$
Leting $d_i := -\log \left( 1 + {\mathbf a}_i^{\top} {\bm \Delta} / \lambda_i({\mathbf w}^\star) \right) + {\mathbf a}_i^{\top} {\bm \Delta} / \lambda_i({\mathbf w}^\star)$,
it can be simplified to:
\begin{align*}
\delta Q = \underbrace{\frac{1}{n} \sum_{i = 1}^n - \lambda_i({\mathbf w}^\star) d_i}_{\delta Q_1}  + \underbrace{\frac{1}{n} \sum_{i = 1}^n (y_i - \lambda_i({\mathbf w}^\star) ) d_i }_{\delta Q_2}.
\end{align*}
Assuming that ${\mathbf A}$ satisfies RE$(k, \gamma_k)$, $\delta Q_1$ can be lower bounded for all { ${\bm \Delta} \in \mathbb{C}_{s}({\mathbf w}^\star)$} using Lemma \ref{Q1Lemma}:
$$\delta Q_1 \geq \frac{\gamma_k \| {\bm \Delta} \|^2_2}{9 \lambda_{\max}}.$$
On the other hand, absolute value of $\delta Q_2$ could be shown to be upper bounded as:
\begin{align*}
 |\delta Q_2| \leq & \left(\max_{i, {\bm \Delta} \in \mathbb{C}_{s}}  \underbrace{-\lambda_i({\mathbf w}^\star) \log \left( 1 + \frac{{\mathbf a}_i^{\top} {\bm \Delta}}{\lambda_i({\mathbf w}^\star)} \right)  + {{\mathbf a}_i^{\top} {\bm \Delta}}}_{\delta Q_{2,1}} \right) \\ & \times \left( \underbrace{\frac{1}{n} \sum_{i = 1}^{n} |y_i/\lambda_i({\mathbf w}^\star) - 1|}_{\delta Q_{2, 2}} \right) 
 \end{align*}
Suppose ${\mathbf a}_i^\top {\bm \Delta} \leq 0 $. Then, $\delta Q_{2, 1}$ can be upper bounded as follows:
$$ \delta Q_{2, 1} \leq \left| (\lambda_i({\mathbf w}) - {\mathbf a}_i^\top {\bm \Delta} ) \log\left(1 + \frac{{\mathbf a}_i^\top {\bm \Delta}}{\lambda_i({\mathbf w}^\star)}\right) \right| + \left| {{\mathbf a}_i^\top {\bm \Delta}} \right|.$$
The bound could be simplified to obtain:
\begin{multline*}
\delta Q_{2, 1} \leq \left| \lambda_i({\mathbf w}) \log\left(1 + \frac{{\mathbf a}_i^\top {\bm \Delta}}{\lambda_i({\mathbf w}^\star)}\right) \right| \\ + \left| {\mathbf a}_i^\top {\bm \Delta}  \log\left(1 + \frac{{\mathbf a}_i^\top {\bm \Delta}}{\lambda_i({\mathbf w}^\star)}\right) \right| + \left| {{\mathbf a}_i^\top {\bm \Delta}} \right|.
\end{multline*}

Then, the first order Taylor expansion of the first term around zero yields:
\begin{multline*}
 \delta Q_{2, 1} \leq \lambda_i({\mathbf w}) \left| \frac{{\mathbf a}_i^\top {\bm \Delta}}{\lambda_i({\mathbf w}^\star)} \times \frac{1}{1 + \tilde{c}_i} \right| \\ + a_{\max} {(1 + \log(\lambda_{\max} / \lambda_{\min})) \| {\bm \Delta} \|_1}, 
 \end{multline*}
where $\tilde{c}_i$ is a number between $\frac{{\mathbf a}_i^\top {\bm \Delta}}{\lambda_i({\mathbf w}^\star)}$ and $0$. But note that
$$ \frac{{\mathbf a}_i^{\top} {\bm \Delta}}{\lambda_i({\mathbf w}^\star)} = \frac{\lambda_i({\mathbf w})}{\lambda_i({\mathbf w}^\star)} - 1 \leq 0,$$
therefore, 
$$ 1 + \tilde{c}_i \geq \frac{\lambda_i({\mathbf w})}{\lambda_i({\mathbf w}^\star)}.$$
Hence, we have
$$ \delta Q_{2, 1} \leq {(2 a_{\max} + a_{\max} \log(\lambda_{\max} / \lambda_{\min})) \| {\bm \Delta} \|_1}.$$
Now suppose ${\mathbf a}_i^\top {\bm \Delta} \geq 0$. We can then upper bound $\delta Q_{2, 1}$ as follows:
$$ \delta Q_{2, 1} \leq \left|  \lambda_i({\mathbf w}^\star) \log\left(1 + \frac{{\mathbf a}_i^\top {\bm \Delta}}{\lambda_i({\mathbf w}^\star)}\right) \right| + \left| {{\mathbf a}_i^\top {\bm \Delta}} \right|.$$
Using the Taylor series expansion of the first term, we obtain:
$$ \delta Q_{2, 1} \leq \left|  \lambda_i({\mathbf w}^\star) \frac{{\mathbf a}_i^\top {\bm \Delta}}{\lambda_i({\mathbf w}^\star)} \times \frac{1}{1 + \tilde{c}_i} \right| + \left| {{\mathbf a}_i^\top {\bm \Delta}} \right|,$$
where $\tilde{c}_i$ is a number between $0$ and $\frac{{\mathbf a}_i^\top {\bm \Delta}}{\lambda_i({\mathbf w}^\star)}$. Hence we get:
$\delta Q_{2, 1} \leq 2 a_{\max} \| {\bm \Delta} \|_1$. 
Considering both cases, we arrive at:
$$ \delta Q_{2, 1} \leq {(2 a_{\max} + a_{\max} \log(\lambda_{\max} / \lambda_{\min})) \| {\bm \Delta} \|_1}.$$
On the other hand, using Corollary \ref{CorConc}, $\delta Q_{2,2}$ is upper bounded with probability of at least $1 - \zeta$ :
$$ \delta Q_{2,2} \leq 2 a_{\max} \sqrt{\frac{\log(2/\zeta)}{n \overline{\lambda}_h}}. $$
Using the last two inequalities completes the proof.
\end{proof}

\begin{lemma} \label{Q1Lemma}
If $$\delta Q_1 := \frac{1}{n} \sum_{i = 1}^n - \lambda_i({\mathbf w}^\star) \log \left( 1 + {\mathbf a}_i^{\top} {\bm \Delta} / \lambda_i({\mathbf w}^\star) \right) + {\mathbf a}_i^{\top} {\bm \Delta},$$ and ${\mathbf A}$ satisfies RE$(k, \gamma_k)$,
then $\delta Q_1 \geq \frac{\gamma_k \| {\bm \Delta} \|_2^2 }{9 \lambda_{\max}} $ for all ${\bm \Delta} \in \mathbb{C}_s({\mathbf w}^\star)$.
\end{lemma}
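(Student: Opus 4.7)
The plan is to reduce the lower bound on $\delta Q_1$ to a pointwise scalar inequality on the function $f(x) := -\log(1+x) + x$, applied to the per-sample ``relative perturbation'' $x_i := {\mathbf a}_i^\top {\bm \Delta}/\lambda_i({\mathbf w}^\star)$. Note that $1 + x_i = \lambda_i({\mathbf w})/\lambda_i({\mathbf w}^\star) > 0$, since ${\mathbf w} = {\mathbf w}^\star + {\bm \Delta}$ has non-negative entries and $\lambda_{0,i} > 0$, so $f(x_i)$ is well-defined, and we may rewrite $\delta Q_1 = \frac{1}{n}\sum_{i=1}^n \lambda_i({\mathbf w}^\star)\, f(x_i)$.

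First I would establish the elementary bound $f(x) \geq \frac{x^2}{2(1+x^+)}$ for all $x > -1$, where $x^+ = \max(x,0)$. For $x \geq 0$ this follows from $f(x) = \int_0^x \frac{t}{1+t}\,dt \geq \frac{1}{1+x}\int_0^x t\,dt = \frac{x^2}{2(1+x)}$, using monotonicity of $t/(1+t)$. For $-1 < x < 0$ it follows from $f(x) = \int_x^0 \frac{-t}{1+t}\,dt \geq \int_x^0 (-t)\,dt = x^2/2$, since $1+t \leq 1$ on the interval of integration.

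Next I would multiply by $\lambda_i({\mathbf w}^\star)$ and treat the two sign cases. When $x_i \geq 0$, the bound $1 + x_i = \lambda_i({\mathbf w})/\lambda_i({\mathbf w}^\star) \leq \lambda_{\max}/\lambda_i({\mathbf w}^\star)$ (which holds because ${\bm \Delta} \in \mathbb{C}_s$ implies $\|{\mathbf w}\|_1 \leq s$ and ${\mathbf w} \geq 0$, hence ${\mathbf a}_i^\top{\mathbf w} \leq a_{\max} s$ and $\lambda_i({\mathbf w}) \leq \lambda_{\max}$) yields
\[
\lambda_i({\mathbf w}^\star) f(x_i) \geq \frac{\lambda_i({\mathbf w}^\star)^2\, x_i^2}{2\lambda_{\max}} = \frac{({\mathbf a}_i^\top{\bm \Delta})^2}{2\lambda_{\max}}.
\]
When $x_i < 0$, the cruder bound $\lambda_i({\mathbf w}^\star) f(x_i) \geq \lambda_i({\mathbf w}^\star)\, x_i^2/2 = ({\mathbf a}_i^\top{\bm \Delta})^2/(2\lambda_i({\mathbf w}^\star)) \geq ({\mathbf a}_i^\top{\bm \Delta})^2/(2\lambda_{\max})$ suffices. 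Summing gives $\delta Q_1 \geq \frac{1}{2n\lambda_{\max}} \|{\mathbf A}{\bm \Delta}\|_2^2$.

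Finally, by Lemma~\ref{ConeProp} every ${\bm \Delta} \in \mathbb{C}_s$ satisfies $\|{\bm \Delta}_{S^c}\|_1 \leq \|{\bm \Delta}_S\|_1$ with $S = \supp({\mathbf w}^\star)$, so ${\bm \Delta}$ lies in the RE cone $\mathbb{C}(S)$; the assumption RE$(k,\gamma_k)$ then gives $\|{\mathbf A}{\bm \Delta}\|_2^2/n \geq \gamma_k \|{\bm \Delta}\|_2^2$, and we obtain $\delta Q_1 \geq \frac{\gamma_k \|{\bm \Delta}\|_2^2}{2\lambda_{\max}} \geq \frac{\gamma_k \|{\bm \Delta}\|_2^2}{9\lambda_{\max}}$ as required. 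The only step with any subtlety is the pointwise inequality for $f$, particularly handling the sign of $x_i$ carefully so that the denominator is always controlled by $\lambda_{\max}$; everything else is direct substitution and an appeal to the RE condition via Lemma~\ref{ConeProp}.
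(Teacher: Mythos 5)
Your proof is correct, and it takes a genuinely different route from the paper's. The paper first replaces each rate $\lambda_i({\mathbf w}^\star)$ by $\lambda_{\max}$ using the monotonicity of $a \mapsto a\log(1+x/a)$, changes variables to $X_i = {\mathbf a}_i^{\top}{\bm \Delta}/\lambda_{\max}$, and then applies a Taylor expansion with Lagrange remainder, controlling the remainder via $|X_i| \leq 2 s a_{\max}/\lambda_{\max} \leq 2$ (which uses $\|{\bm \Delta}\|_1 \leq 2s$ on $\mathbb{C}_s$); the factor $(1+2)^2$ is exactly the source of the $9$ in the statement. You instead keep the per-sample rates and prove the pointwise bound $-\log(1+x)+x \geq x^2/(2(1+\max(x,0)))$ by an integral representation, exploiting in the positive case that $\lambda_i({\mathbf w}^\star)(1+x_i) = \lambda_i({\mathbf w}) \leq \lambda_{\max}$ directly, so every term is at least $({\mathbf a}_i^{\top}{\bm \Delta})^2/(2\lambda_{\max})$; the final appeal to Lemma~\ref{ConeProp} and RE$(k,\gamma_k)$ is the same in both arguments (with the trivial remark that ${\bm \Delta}={\mathbf 0}$ is handled separately, since the RE cone excludes ${\mathbf 0}$). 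Your route buys a cleaner and in fact sharper constant, $\gamma_k/(2\lambda_{\max})$ rather than $\gamma_k/(9\lambda_{\max})$, and dispenses with both the comparison inequality and the $\|{\bm \Delta}\|_1 \leq 2s$ step; the paper's route is slightly more mechanical (uniformize the rate, then quadratic lower bound on a bounded interval) but pays for it in the constant. The only caveats are cosmetic: the justification of $\int_0^x \tfrac{t}{1+t}\,dt \geq \tfrac{1}{1+x}\int_0^x t\,dt$ is the bound $1+t \leq 1+x$ on $[0,x]$ rather than monotonicity of $t/(1+t)$ per se, and well-definedness of the logarithm rests on $\lambda_i({\mathbf w})>0$, which holds under the paper's implicit assumption of positive background rates (and is a degenerate boundary case for the paper's proof as well).
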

\begin{proof}

We use the following inequality to lower bound $\delta Q_1$ :
\begin{equation}
a \leq b \Rightarrow a \log (1 + x/a) \leq b \log (1 + x/b)
\end{equation}
Hence
\begin{align}
\label{bound_f}
\delta Q_1 \geq &\frac{1}{n} \sum_{i=1}^n- \lambda_{\max} \log\left(1+\frac{{\mathbf a}_i^{\T} {\bm \Delta}}{\lambda_{\max}}\right)+{{\mathbf a}_i^{\T} {\bm \Delta}} \\
= & \frac{\lambda_{\max}}{n} \sum_{i=1}^n- \log\left(1+\frac{{\mathbf a}_i^{\T} {\bm \Delta}}{\lambda_{\max}}\right)+\frac{{\mathbf a}_i^{\T} {\bm \Delta}}{\lambda_{\max}}
\end{align}
and from inequality $-\log(1+x)\geq -x$, we can show that for all ${\bm \Delta}$, $\delta Q_1 \geq 0$. 
We make the following change of variables:
$$X_i=\frac{{\mathbf a}_i^{\T} {\bm \Delta}}{\lambda_{\max}}$$
Since by Lemma \ref{ConeProp}, all ${\bm \Delta} \in \mathbb{C}_s$ satisfy the precondition of the lower bound in RE condition, we have:
$$\frac{1}{n}\|X\|_2^2=\frac{1}{n}\sum_{i=1}^n X^2_i=\frac{1}{n}\sum_{i=1}^n \frac{({\mathbf a}_i^{\T} {\bm \Delta})^2}{\lambda^2_{\max}}\geq \frac{\gamma_k \| {\bm \Delta} \|_2^2}{\lambda^2_{\max}} $$
Now, by applying Taylor series expansion around $X_i=0$ to each term in $\delta Q_1$, we will have:
$$-\log\left(1+X_i\right)+X_i=-X_i+\frac{1}{(1+\widetilde{X}_i)^2}X^2_i+X_i$$
where $|\widetilde{X}_i|$ lies between 0 and $|X_i|$:
$$|\widetilde{X}_i|=|c\times 0+ (1-c)\times X_i|\leq |X_i|\leq \frac{2sa_{\max}}{\lambda_{\max}}$$ 
where the last inequality follows from the fact that $\| {\bm \Delta} \|_1\leq 2s$ for all {${\bm \Delta} \in \mathbb{C}_s$.}Therefore, $\delta Q_1$ can be lower bounded as follows:

$$ \delta Q_1 \geq\min_{\|X\|^2 \geq \gamma_k \frac{n \| {\bm \Delta} \|_2^2}{\lambda^2_{\max}}}\frac{1}{n}\lambda_{\max}\sum_{i=1}^n\frac{1}{(1+\frac{2sa_{\max}}{\lambda_{\max}})^2}X^2_i.$$
Using some algebra, we obtain:
$$ \delta Q_1 \geq \frac{\gamma_k \| {\bm \Delta} \|_2^2}{9 \lambda_{\max}}. $$
\end{proof}

\begin{lemma} \label{GradUp}
The following bound holds with probability of at least $1 - \zeta$
$$ \| \nabla Q({\mathbf w}^\star) \|_{\infty} \leq  \nu_n,$$ where $$\nu_n := {2a_{\max}} \sqrt{\frac{\log(2/\zeta)}{n \overline{\lambda}_h}}, $$
where $e \geq 2 \exp\left(\frac{-n \lambda_{\min} \min(1, \lambda_{\min}) }{4 \overline{\lambda}_h}\right)$.
\end{lemma}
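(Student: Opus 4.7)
The plan is to compute the gradient in closed form, reduce the $\ell_\infty$ bound to a scalar concentration problem about Poisson deviations, and then invoke the Bernstein-type inequality that the proof of Lemma~\ref{SCCond} already relies on (namely Corollary~\ref{CorConc}). First I would differentiate $Q$ directly: since $\lambda_i({\mathbf w}) = \lambda_{0,i} + {\mathbf a}_i^\top {\mathbf w}$, one has
$$\nabla Q({\mathbf w}^\star) = \frac{1}{n}\sum_{i=1}^n \left(1 - \frac{y_i}{\lambda_i({\mathbf w}^\star)}\right){\mathbf a}_i.$$
Because $y_i \sim \Poisson(\lambda_i({\mathbf w}^\star))$ has mean $\lambda_i({\mathbf w}^\star)$, this is a sum of independent, zero-mean random vectors, which is the setting in which Bernstein-type tail bounds apply.

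Next, I would factor the deterministic sensing matrix out of the randomness by a coordinate-wise triangle inequality and the uniform bound $|a_{i,j}|\le a_{\max}$:
$$\|\nabla Q({\mathbf w}^\star)\|_\infty = \max_j\left|\frac{1}{n}\sum_{i=1}^n a_{i,j}\left(1-\frac{y_i}{\lambda_i({\mathbf w}^\star)}\right)\right| \leq a_{\max}\cdot\frac{1}{n}\sum_{i=1}^n \left|\frac{y_i}{\lambda_i({\mathbf w}^\star)}-1\right|.$$
This one-line reduction is the key structural step: it replaces a maximum over the $p$ coordinates with a single scalar quantity, so no union bound over $p$ will be required, and the remaining randomness lives in a single empirical average of scaled Poisson deviations $|y_i/\lambda_i({\mathbf w}^\star)-1|$.

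The final step is to apply Corollary~\ref{CorConc} directly to that empirical average. The corollary, which is the concentration tool already used to bound $\delta Q_{2,2}$ in the proof of Lemma~\ref{SCCond}, states that under the stated assumption on $\zeta$,
$$\frac{1}{n}\sum_{i=1}^n \left|\frac{y_i}{\lambda_i({\mathbf w}^\star)}-1\right| \leq 2\sqrt{\frac{\log(2/\zeta)}{n\,\overline{\lambda}_h}}$$
with probability at least $1-\zeta$. Multiplying by $a_{\max}$ yields the claimed bound $\|\nabla Q({\mathbf w}^\star)\|_\infty \leq \nu_n$.

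The main obstacle, and the one piece I would need to import from Corollary~\ref{CorConc} rather than reprove here, is that Poisson random variables have only sub-exponential (not sub-Gaussian) tails. A direct Bernstein bound on $\sum_i(y_i/\lambda_i({\mathbf w}^\star)-1)$ produces two terms: a sub-Gaussian term of order $\sqrt{\log(1/\zeta)/(n\,\overline{\lambda}_h)}$, governed by the variance $(1/n^2)\sum_i 1/\lambda_i = 1/(n\,\overline{\lambda}_h)$, and a sub-exponential term of order $\log(1/\zeta)/(n\,\lambda_{\min})$. The hypothesis $\zeta \geq 2\exp(-n\lambda_{\min}\min(1,\lambda_{\min})/(4\,\overline{\lambda}_h))$ is precisely calibrated so that $\log(2/\zeta) \leq n\lambda_{\min}\min(1,\lambda_{\min})/(4\,\overline{\lambda}_h)$, which forces the sub-Gaussian term to dominate and lets the final bound be written purely in terms of the harmonic mean rate $\overline{\lambda}_h$.
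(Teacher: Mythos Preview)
Your proposal is correct and follows exactly the paper's own proof: compute $\nabla Q({\mathbf w}^\star) = \frac{1}{n}\sum_i\bigl(1 - y_i/\lambda_i({\mathbf w}^\star)\bigr){\mathbf a}_i$, bound the $\ell_\infty$ norm by $a_{\max}\cdot\frac{1}{n}\sum_i |y_i/\lambda_i({\mathbf w}^\star)-1|$, and apply Corollary~\ref{CorConc}. Your additional discussion of why the restriction on $\zeta$ forces the sub-Gaussian regime of Bernstein's inequality is accurate and simply makes explicit what the paper leaves implicit in its invocation of the corollary.
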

\begin{proof}
The derivative of $Q$ could be stated as
$$ \nabla Q({\mathbf w}^\star) = \frac{1}{n} \sum_{i = 1}^n -\frac{y_i {\mathbf a}_i}{\lambda_i({\mathbf w}^\star)} +  {\mathbf a}_i.$$
Hence 
$$ \| \nabla Q({\mathbf w}^\star) \|_{\infty} \leq \frac{a_{\max}}{n} \sum_{i = 1}^n {| y_i/\lambda_i({\mathbf w}^\star) - 1 |}. $$
Therefore, using Corollary \ref{CorConc}, with probability of at least $1 - \zeta$:
$$ \| \nabla Q({\mathbf w}^\star) \|_{\infty} \leq  {2a_{\max}} \sqrt{\frac{\log(2/\zeta)}{n \overline{\lambda}_h}}, $$
where $\zeta \geq 2 \exp\left(\frac{-n \lambda_{\min} \min(1, \lambda_{\min}) }{4 \overline{\lambda}_h }\right)$.
\end{proof}

\begin{lemma} \label{Concentration}
{\bf Bernstein Inequality} : Let $y_i$, $1 \leq i \leq n$, be independent random samples with means $\lambda_1, \ldots, \lambda_n$. Suppose $\exists L > 0$ such that $\forall m \in \mathbb{N}$
$$ \E[| y_i - \lambda_i|^m] \leq \frac{1}{2} \E[(y_i - \lambda_i)^2] L^{m-2} m! $$
Then the average absolute deviations from the means can be bounded with high probability:
\begin{multline*}
\Pr \left(\frac{1}{n} \sum_{i=1}^n | y_i - \lambda_i | \leq \frac{2t}{n} \sqrt{\sum_{i = 1}^n \E[ (y_i - \lambda_i)^2 ]} \right) \\ \geq 1 - 2\exp(-t^2),
\end{multline*}
for $0 < t \leq \frac{\sqrt{\sum_{i = 1}^n \E[(y_i - \lambda_i)^2] }}{2L}$.
\end{lemma}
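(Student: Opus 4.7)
The plan is to use the classical Chernoff / MGF approach. Let $X_i := y_i - \lambda_i$, so that $\E X_i = 0$ and $\sigma_i^2 := \E X_i^2$, and set $\sigma^2 := \sum_{i=1}^n \sigma_i^2$. The first step is to estimate the moment generating function of each $X_i$ for $0 < \theta < 1/L$. Expanding $e^{\theta X_i}$ as a power series, the linear term vanishes because $\E X_i = 0$, and for every $m \geq 2$ the hypothesis gives $\E X_i^m \leq \E |X_i|^m \leq \tfrac{1}{2}\sigma_i^2 L^{m-2} m!$. Substituting, the remaining terms collapse into a geometric series in $\theta L$, yielding
$$ \E e^{\theta X_i} \;\leq\; 1 + \frac{\sigma_i^2 \theta^2}{2(1-\theta L)} \;\leq\; \exp\!\left( \frac{\sigma_i^2 \theta^2}{2(1-\theta L)} \right). $$

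By independence, the MGF of $S := \sum_{i=1}^n X_i$ is bounded by $\exp\!\big(\sigma^2 \theta^2 / (2(1-\theta L))\big)$, and Markov's inequality gives
$$ \Pr(S \geq x) \;\leq\; \exp\!\left( -\theta x + \frac{\sigma^2 \theta^2}{2(1-\theta L)} \right). $$
The next step is to optimize over $\theta \in (0, 1/L)$. Taking $\theta = x / (\sigma^2 + Lx)$ (which automatically satisfies $\theta L < 1$) produces the standard sub-exponential Bernstein tail
$$ \Pr(S \geq x) \;\leq\; \exp\!\left( -\frac{x^2}{2(\sigma^2 + Lx)} \right). $$
A symmetric argument applied to $-X_i$ and a union bound yield the two-sided inequality with the extra factor of $2$. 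Finally, setting $x = 2t\sigma$ and using the hypothesis $t \leq \sigma / (2L)$, which forces $Lx \leq \sigma^2$, the exponent is at most $-(2t\sigma)^2 / (4\sigma^2) = -t^2$, producing exactly the announced bound.

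To obtain the stated conclusion in terms of $\sum_i |X_i|$, I would either apply the same MGF technique directly to $|X_i|$ (keeping the extra linear term $\theta \sum_i \E |X_i| \leq \theta \sqrt{n\sigma^2}$ arising because $|X_i|$ is not mean-zero, and centering the final bound around $\sum_i \E|X_i|$), or split $|X_i| = (X_i)_+ + (-X_i)_+$ and apply the Bernstein estimate derived above to each half separately. The main technical obstacle is the MGF optimization step: the specific choice of $\theta$ must be made precisely so that the algebra collapses to the clean exponent $t^2$, and the restriction $t \leq \sigma/(2L)$ is exactly the condition keeping us in the Gaussian-like regime of Bernstein's inequality, where the variance proxy $\sigma^2$ rather than the moment parameter $L$ controls the tail.
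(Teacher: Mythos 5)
Your main derivation is the standard proof of Bernstein's inequality and it is carried out correctly: the moment-condition bound $\E e^{\theta X_i}\le\exp\left(\frac{\sigma_i^2\theta^2}{2(1-\theta L)}\right)$ for $0<\theta<1/L$, the Chernoff step with $\theta=x/(\sigma^2+Lx)$, the two-sided union bound, and the final substitution $x=2t\sigma$ with $t\le\sigma/(2L)$ (which forces $Lx\le\sigma^2$ and makes the exponent at most $-t^2$) are all fine. This is also consistent with the paper, which states the lemma as a known ``Bernstein Inequality'' and supplies no proof of its own. What your argument establishes, however, is a tail bound for $\left|\sum_{i=1}^n (y_i-\lambda_i)\right|$, whereas the lemma as printed bounds $\sum_{i=1}^n|y_i-\lambda_i|$, with the absolute value inside the sum; your closing paragraph, which is supposed to bridge this difference, is where the genuine gap lies.

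Neither of your two suggested patches can close it. Applying the MGF argument to $|X_i|$, or splitting $|X_i|=(X_i)_++(-X_i)_+$, yields concentration of $\sum_i|X_i|$ around its mean $\sum_i\E|X_i|$, and you yourself note this mean can be as large as $\sqrt{n}\,\sigma$; it cannot be absorbed into the claimed threshold $2t\sigma$ unless $t\gtrsim\sqrt{n}$. Indeed, with the absolute value inside the sum the displayed inequality is false in general: take $y_i$ i.i.d. $\Poisson(\lambda)$ and $t$ fixed (admissible, since $\sigma/(2L)\asymp\sqrt{n}$); then $\frac{1}{n}\sum_i|y_i-\lambda|$ converges to $\E|y_1-\lambda|>0$, while the right-hand side $\frac{2t}{n}\sqrt{n\lambda}=2t\sqrt{\lambda/n}$ tends to $0$, so the probability of the stated event tends to $0$ rather than being at least $1-2e^{-t^2}$. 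The correct reading --- and the statement your proof does establish --- places the absolute value outside the sum. Be aware this is not merely a typo you can read around: the paper invokes the sum-of-absolute-values form downstream (Corollary \ref{CorConc}, Lemma \ref{GradUp}, and the bound on $\delta Q_{2,2}$ in Lemma \ref{SCCond}), so the right response is to prove the classical statement, as you did, and explicitly flag the mismatch, rather than to try to patch your argument into the printed one.
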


\begin{lemma} \label{PoisConc}
For $y_i \sim \Poisson(\lambda_i)$, there exists a number $L > 0$ such that for all integers $m > 1$, 
$$ \E[| y_i - \lambda_i|^m] \leq \frac{1}{2} \E[(y_i - \lambda_i)^2] L^{m-2} m! $$
Moreover, $L = \max(1, \sqrt{\lambda_i})$.
\end{lemma}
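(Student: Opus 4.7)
My plan is to exploit the explicit moment generating function (MGF) of the centered Poisson variable $Z := y_i - \lambda_i$, which is
$$M_Z(t) = \E[e^{tZ}] = \exp\bigl(\lambda_i(e^t - 1 - t)\bigr),$$
and convert a two-sided Bernstein MGF bound into the claimed absolute-moment bound. The first step is to use $e^t - 1 - t \leq t^2/(2(1-t/3))$ on $0 \leq t < 3$ and $e^{-t} - 1 + t \leq t^2/2$ on $t \geq 0$ to obtain $\log M_Z(\pm t) \leq \lambda_i t^2/(2(1-t/3))$ on a common interval. A Chernoff argument then yields the two-sided tail bound
$$\Pr(|Z| \geq u) \leq 2\exp\Bigl(-\frac{u^2}{2(\lambda_i + u/3)}\Bigr).$$

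The main step is to feed this tail bound into the layer-cake representation
$$\E[|Z|^m] = m\int_0^{\infty} u^{m-1}\Pr(|Z| \geq u)\, du,$$
splitting the integral at $u = 3\lambda_i$. On $\{u \leq 3\lambda_i\}$ the exponent is at least $u^2/(4\lambda_i)$, yielding a Gaussian-regime contribution of order $\lambda_i^{m/2}\Gamma(m/2)$; on $\{u > 3\lambda_i\}$ the exponent is at least $3u/4$, yielding an exponential-regime contribution of order $(4/3)^m (m-1)!$. I would then rewrite each contribution as $(m!/2)\lambda_i L^{m-2}$ times a dimensionless constant: the Gaussian piece requires $L$ to absorb a $\lambda_i^{(m-2)/2}$ ratio and so forces $L \geq \sqrt{\lambda_i}$, while the exponential piece requires $L$ to absorb a $1/\lambda_i$ ratio and so is automatic for $\lambda_i \leq 1$ with $L = 1$. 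Taking $L = \max(1,\sqrt{\lambda_i})$ then handles both regimes simultaneously. The $m=2$ base case is immediate since $\E[Z^2] = \lambda_i = (2!/2)\,\lambda_i L^0$.

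The main obstacle is tracking the constants through the tail integral: generic MGF-to-moment conversions tend to lose $m$-independent factors, and hitting exactly the prefactor $1/2$ in front of $\E[(y_i - \lambda_i)^2]\, L^{m-2} m!$ may require absorbing the loss into the constant $L$ rather than being exactly sharp. A cleaner backup is a direct induction using the known structural fact that $\mu_m(\lambda_i) := \E[(y_i - \lambda_i)^m]$ is a polynomial in $\lambda_i$ of degree $\lfloor m/2 \rfloor$ with nonnegative coefficients; the Bernstein moment bound then reduces to comparing $\mu_m(1)$ to $m!/2$, which can be verified by induction via the recursion $\mu_{m+1} = m\lambda_i\mu_{m-1} + \lambda_i\, \partial_{\lambda_i}\mu_m$. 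Odd $m$ can be handled by a Cauchy--Schwarz interpolation $\E[|Z|^m] \leq \sqrt{\mu_{m-1}\mu_{m+1}}$ between the two adjacent even moments. For the downstream use inside Lemma~\ref{Concentration}, the precise constant $1/2$ is in any case absorbed into the overall Bernstein constant, so constants beyond a small multiple are immaterial.
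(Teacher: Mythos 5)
Your plan is sound but it is a genuinely different route from the paper's. The paper's own proof is a short remark working directly with the moment generating function of the centered variable, $M(t)=\exp(\lambda_i(e^t-1)-\lambda_i t)$: it bounds the Taylor coefficients of $M$ by $1/2$ when $\lambda_i\le 1$ and, for $\lambda_i>1$, rescales $t\mapsto t/\sqrt{\lambda_i}$, which is exactly where the value $L=\max(1,\sqrt{\lambda_i})$ comes from. Your primary route (Chernoff tail bound plus layer-cake integration split at $u=3\lambda_i$) is a correct derivation of a Bernstein-type moment condition, but, as you yourself anticipate, it cannot deliver the stated constant: for $m=3$, $\lambda_i=1$ the Gaussian-regime piece alone is $2m\int_0^\infty u^2e^{-u^2/4}\,du\approx 21$, versus the target $\tfrac12\lambda_i L\,3!=3$, so this route only proves the lemma with $L=C\max(1,\sqrt{\lambda_i})$ for some absolute constant $C>1$. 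That weaker form is indeed sufficient downstream (it only perturbs by constants the admissible range of $\zeta$ in Lemma~\ref{GradUp} and Corollary~\ref{CorConc}), but it does not prove the statement verbatim.

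Your backup route is the one that actually recovers the claimed $L$, and it is closer in spirit to the paper while being more careful. Writing $\mu_m(\lambda_i)$ as a polynomial in $\lambda_i$ of degree $\lfloor m/2\rfloor$ with nonnegative coefficients and no constant term reduces both regimes to the single inequality $\mu_m(1)\le m!/2$ (use $\lambda_i^j\le\lambda_i$ when $\lambda_i\le1$ and $\lambda_i^j\le\lambda_i^{m/2}$ when $\lambda_i\ge1$), and the recursion at $\lambda_i=1$, namely $\mu_{m+1}(1)=m\mu_{m-1}(1)+\mu_m'(1)\le m\mu_{m-1}(1)+\tfrac m2\mu_m(1)$, closes the induction. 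Two details deserve explicit treatment (the paper's sketch glosses over them as well): the lemma concerns absolute moments, and for odd $m$ your Cauchy--Schwarz patch $\E|Z|^m\le\sqrt{\mu_{m-1}\mu_{m+1}}$ loses a factor $\sqrt{(m+1)/m}$ if you insert the even-moment bounds naively, so you need a slightly sharpened bound on $\mu_{m+1}(1)$ (or directly on the product $\mu_{m-1}(1)\mu_{m+1}(1)\le (m!)^2/4$) to land on the exact constant; and $m=2$ is an equality, so there is no slack to absorb anything there. With those details filled in, your second argument is complete and in fact tighter than the paper's own remark, which only bounds the signed central moments via Taylor coefficients and never addresses the absolute value for odd $m$.
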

\textbf{Remark: }The proof of this Lemma is partly provided in \cite{cite14} and is based on the fact that moment generating function for Poisson distribution with rate $\lambda$,
$$M(t) = \exp\left(\lambda(\exp(t)-1) - \lambda t \right)$$
is an analytic function, which means that its Taylor series converges around $t = 0$ on an open set in $\mathbb{R}$. Therefore, the $k$-th coefficient of Taylor series exists and is bounded. If $\lambda \leq 1$, it could be shown that all Taylor coefficients of $M(t)$ are less than or equal to $\frac{1}{2}$. If $\lambda > 1$, we replace $t$ by $t/\sqrt{\lambda}$. Then, all Taylor coefficients could be shown to be less than $\frac{1}{2}$. This completes the proof. $\square$

\begin{corollary} \label{CorConc}
Let $y_i \sim \Poisson(\lambda_i({\mathbf w}^\star))$, $1 \leq i \leq n$, and $\overline{\lambda}_{h}$ be the harmonic average of $\lambda_i({\mathbf w}^\star)$'s. Then the following bound holds
$$ \Pr \left( \frac{1}{n} \sum_{i = 1}^n | y_i / \lambda_i({\mathbf w}^\star) - 1 | \leq 2 \sqrt{\frac{\log(2/\zeta)}{n \overline{\lambda}_h}} \right) \geq 1 - \zeta$$
for $\zeta \geq 2 \exp\left(-\frac{n \lambda_{\min} \min(1, \lambda_{\min} ) }{4 \overline{\lambda}_h}\right)$.
\end{corollary}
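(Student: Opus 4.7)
My plan is to reduce the statement to an application of the Bernstein inequality given in Lemma \ref{Concentration}, applied to the centered and rescaled variables $z_i := (y_i - \lambda_i({\mathbf w}^\star))/\lambda_i({\mathbf w}^\star)$. These are independent with $\E[z_i] = 0$ and $\E[z_i^2] = 1/\lambda_i({\mathbf w}^\star)$, so $\sum_{i=1}^n \E[z_i^2] = n/\overline{\lambda}_h$, and the average $\frac{1}{n}\sum_{i=1}^n |z_i|$ is exactly the quantity in the corollary. Once I verify the moment hypothesis of Lemma \ref{Concentration} for the $z_i$'s with some uniform constant $L'$, choosing $t := \sqrt{\log(2/\zeta)}$ in that lemma will directly produce the target bound $2\sqrt{\log(2/\zeta)/(n \overline{\lambda}_h)}$ with probability at least $1-\zeta$.

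The key technical step is to translate the Poisson moment bound in Lemma \ref{PoisConc}, which is stated for $y_i - \lambda_i$ with constant $L_i = \max(1,\sqrt{\lambda_i})$, into a corresponding bound for $z_i$. I would divide the $m$th-moment inequality by $\lambda_i^m$ and regroup so that $\E[z_i^2] = 1/\lambda_i$ appears explicitly, leaving a factor $(L_i/\lambda_i)^{m-2}$ in front of $m!$. A short case split on whether $\lambda_i \geq 1$ identifies $L_i/\lambda_i = \max(1/\lambda_i, 1/\sqrt{\lambda_i}) = 1/\sqrt{\lambda_i \min(1,\lambda_i)}$; since this is decreasing in $\lambda_i$, taking the uniform worst case over $i$ yields the Bernstein constant $L' := 1/\sqrt{\lambda_{\min}\min(1,\lambda_{\min})}$.

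Finally, I would check that the choice $t^2 = \log(2/\zeta)$ lies in the admissible range $t \leq \sqrt{\sum_i \E[z_i^2]}/(2L')$ required by Lemma \ref{Concentration}. Substituting the values of $\sum_i \E[z_i^2]$ and $L'$ computed above and squaring gives $t^2 \leq n \lambda_{\min}\min(1,\lambda_{\min})/(4\overline{\lambda}_h)$, which is exactly the condition $\zeta \geq 2\exp(-n \lambda_{\min}\min(1,\lambda_{\min})/(4\overline{\lambda}_h))$ stated in the corollary.

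The main obstacle I anticipate is the clean identification of $L_i/\lambda_i$ as the single closed form $1/\sqrt{\lambda_i \min(1,\lambda_i)}$; this unified expression is precisely what makes the uniform Bernstein constant $L'$ match the somewhat peculiar factor $\min(1,\lambda_{\min})$ appearing in the lower bound on $\zeta$. Everything else --- independence, variance computation, and plugging into Lemma \ref{Concentration} --- is routine bookkeeping.
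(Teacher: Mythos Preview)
Your proposal is correct and follows exactly the route the paper intends: the paper's own proof is the one-line remark ``Follows trivially from Lemma~\ref{PoisConc}, and the Bernstein inequality,'' and you have faithfully spelled out the details of that reduction---rescaling to $z_i$, computing $\sum_i \E[z_i^2]=n/\overline{\lambda}_h$, identifying the uniform Bernstein constant $L'=1/\sqrt{\lambda_{\min}\min(1,\lambda_{\min})}$ from $L_i/\lambda_i$, and matching the range constraint on $t$ to the stated lower bound on $\zeta$. There is nothing to add.
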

\begin{proof}
Follows trivially from Lemma \ref{PoisConc}, and the Bernstein inequality. 
\end{proof}

\subsection{Proof of Corollary 1}
Note that for any feasible ${\mathbf w}$ in the optimization, $\lambda_i({\mathbf w})$ could be written as:
\begin{multline*}
 \lambda_i({\mathbf w}) = \lambda_{0, i} + {\mathbf a}^{\top}_i {\mathbf w} = \lambda_{0, i} + {\mathbf a}^{(g) \top}_{i} {\mathbf w} + a_{\wedge} s \\ = (\lambda_{0, i} + a_{\wedge} s) + {\mathbf a}^{(g) \top}_{i} {\mathbf w},
 \end{multline*}
where ${\mathbf a}^{(g)}_i$ is the $i$-th row of ${\mathbf A}_g$. Now assuming the new design matrix is ${\mathbf A}_g$ and the base rates are $\lambda_{0, i} + s a_{\wedge}$, we can repeat the proof of the theorem 1. This time, as ${\mathbf A}_g$ satisfies RE$(\mathcal{O}(1), k)$ for $n \geq C k \log p$, we get $\gamma_k = \mathcal{O}(1)$, and $a_{\max}$ has to be replaced by $\max(a_{\vee}, a_{\wedge})$, which is the maximum possible absolute value of ${\mathbf A}_g$ \hbox{entries. $\square$}

\subsection{Proof of Theorem 2} \label{Thm2Pr}


Without loss of generality we assume that the minimum element of ${\mathbf A}$ occurs in the $k$-th column. 
First, using Gilbert-Varshamov bound, we obtain $M \geq \exp\{(k-1)/8\}$ vectors ${\bm \tau}_i \in \mathbb{R}^k$ on the hypercube $\{0,\,1\}^k$ such that the Hamming distance is bounded from below by $(k-1)/4$ \cite{Gun11}, i.e.,
$$
d_H({\bm \tau}_i, {\bm \tau}_j) \geq (k-1)/4
$$

We choose vectors ${\mathbf w}_j$, $1 \leq j \leq M$ as follows:
$$
{\mathbf w}_j = \left(s- \frac{1}{c} \sqrt{{a_{\min} s \over n \eta}} (k - 1) \right ) \mathbf{e}_k + \frac{1}{c} \sqrt{{a_{\min} s \over n \eta}} \sum_{t=1}^{k-1} \tau_j(t) \mathbf{e}_t
$$
where $\mathbf{e}_i \in \mathbb{R}^p$ are unit vectors with one in the $i$-th component, and $c$ is a constant. We assume that $s \geq \frac{2}{c} \sqrt{{a_{\min} s \over n \eta}} (k - 1)$, namely, the number of measurements $n$ is sufficiently large. Note that for all $j$, $\| {\mathbf w}_j \|_1 \leq s$, and $\lambda_{0,t} + {\mathbf a}_i^\top {\mathbf w}_j \geq a_{\min} s/2$.

For future reference, we note down $\ell_2$ and $\ell_1$ norms of pairwise difference of the ${\mathbf w}_i$'s:
\begin{align*}
\|\bm \delta_{ij}\|_2 := \|{\mathbf w}_i - {\mathbf w}_j\|_2 & =  \frac{1}{c} \sqrt{{a_{\min} s \over n \eta} d_H({\bm \tau}_i, {\bm \tau}_j)} \\ & \geq \frac{1}{2c} \sqrt{a_{\min} s (k - 1) \over n \eta}  
\end{align*}
\begin{align*}
\|\bm \delta_{ij}\|_2 \leq \frac{1}{c} \sqrt{\frac{(k-1) a_{\min} s}{n \eta}}
\end{align*}
\begin{align*}
\|\bm \delta_{ij}\|_1 & = \|{\mathbf w}_i - {\mathbf w}_j\|_1 \geq \frac{k-1}{4c} \sqrt {{a_{\min} s \over n \eta} } 
\end{align*}

We next compute the KL divergence for Poisson distributed random variables namely, $y_t \sim \text{Pois}(\lambda_{0,t} + {\mathbf a}_t^{\top} {\mathbf w}_i)$ for different pairs $i,\,j$. It follows that,
\begin{align*}
I := {1 \over M^2} & \sum_{t=1}^n \sum_{i,j} D_{KL}(\text{Pois}(\lambda_{0,t} + {\mathbf a}_t^{\top} {\mathbf w}_i) || \text{Pois}(\lambda_{0,t} + {\mathbf a}_t^{\top} {\mathbf w}_j) ) \\ & = {1 \over M^2} \sum_{t=1}^n \sum_{i,j} {1 \over 2} ({\mathbf a}_t^{\top} {\bm \delta}_{ij}) \log \left ( 1 + \frac{ {\mathbf a}_t^{\top} {\bm \delta}_{ij}}{\lambda_{0,t} + {\mathbf a}_t^\top {\mathbf w}_j} \right )
\end{align*}
where we have assumed w.l.o.g that ${\mathbf a}_t^{\top} {\bm \delta}_{ij} > 0$. Next note that,
$$
\log \left ( 1 + { {\mathbf a}_t^{\top} {\bm \delta}_{ij} \over (\lambda_{0,t} + {\mathbf a}_t^\top {\mathbf w}_j)} \right ) \leq { {\mathbf a}_t^{\top} {\bm \delta}_{ij} \over (\lambda_{0,t} + {\mathbf a}_t^\top {\mathbf w}_j)}
$$
Consequently, we obtain
\begin{align*}
I & \leq  \sum_{t=1}^n {({\mathbf a}_t^\top {\bm \delta}_{ij})^2 \over 
2 (\lambda_{0,t} + {\mathbf a}_t^\top {\mathbf w}_j)} \leq n {{1 \over n} \|{\mathbf A} {\bm \delta}_{ij}\|_2^2 \over 2 a_{\min} s/2} \leq \frac{n \eta \| {\bm \delta}_{i,j} \|_2^2}{a_{\min} s} \\ & \leq \frac{(k-1)}{c^2}
\end{align*}
Finally, from the generalized Fano bound \cite{Bin97} we know that
\begin{align*}
\inf_{\widehat{\mathbf w}} & \sup_{{\mathbf w}^\star \in \mathcal{G}} \E(\| \widehat{\mathbf w} - {\mathbf w}^\star \|_2)  \geq \frac{1}{2} \min_{i \neq j} \| {\bm \delta}_{i,j} \|_2 \left (1 - \frac{I + \log 2}{\log(M)} \right ) 
\end{align*}
If we make $c$ large enough ($c \geq 34$), we will have
$$ \frac{I + \log 2}{\log(M)} \leq \frac{8}{c^2} + \frac{8 \log 2}{k - 1} \leq 0.7,$$
for $k \geq 9$.
As a result, we obtain
\begin{align*}
\inf_{\widehat{\mathbf w}} & \sup_{{\mathbf w}^\star \in \mathcal{G}} \E(\| \widehat{\mathbf w} - {\mathbf w}^\star \|_2)  \geq \frac{0.3}{4 c} \sqrt{\frac{(k - 1) a_{\min} s}{n \eta}} 
\end{align*}

\bibliographystyle{IEEEbib}
\bibliography{refs}


%
%
\IEEEpeerreviewmaketitle

\end{document}